\title{`Indifference' methods for managing agent rewards}
\author{
Stuart Armstrong\thanks{Machine Intelligence Research Institute, Berkeley, USA.} \\
  Future of Humanity Institute \\
  Oxford University \\
  UK \\
  \texttt{stuart.armstrong@philosophy.ox.ac.uk} \\
 \And
  Xavier O'Rourke\thanks{Future of Humanity Institute, Oxford University, UK.} \\
  The Australian National University \\
  Canberra \\
  Australia \\
  \texttt{xavier.orourke@gmail.com} \\
}
\newcommand{\expect}{\mathbb{E}}
\newcommand{\pol}{\pi}
\newcommand{\St}{\mathcal{S}}
\newcommand{\Ac}{\mathcal{A}}
\newcommand{\Ob}{\mathcal{O}}
\newcommand{\Hi}{{\mathcal{H}}}
\newcommand{\Hin}{\Hi_{n}}
\begin{document}

\maketitle

\begin{abstract}%
`Indifference' refers to a class of methods used to control reward based agents.
Indifference techniques aim to achieve one or more of three distinct goals: rewards dependent on certain events (without the agent being motivated to manipulate the probability of those events), effective disbelief (where agents behave as if particular events could never happen), and seamless transition from one reward function to another (with the agent acting as if this change is unanticipated).
This paper presents several methods for achieving these goals in the POMDP setting, establishing their uses, strengths, and requirements.
These methods of control work even when the implications of the agent's reward are otherwise not fully understood.
\end{abstract}

\section{Introduction}
In designing a reward for a reinforcement learning agent, the programmer may have certain general constraints they want to include \citep{DBLP:journals/corr/AmodeiOSCSM16}, \citep{DBLP:journals/corr/RussellDT16}, \citep{Baum2017}, \citep{AGI:review}.
For instance, they may want the agent to not manipulate the probability of a certain event, or to behave as if the event were certain or impossible \citep{DBLP:journals/corr/abs-1711-09883}.
This event may represent, for example, the agent being powered off \citep{DBLP:journals/corr/Hadfield-Menell16a} \citep{DBLP:journals/corr/RiedlH17}, or having its reward function changed by a human designer \citep{Omohundro:2008:BAD:1566174.1566226} \citep{DBLP:journals/corr/EverittFDH16}.
There are a variety of methods for achieving these constraints which work by modifying the reward systems of agents. These methods are grouped under the broad description of `indifference' \citep{Orseau16,soares2015corrigibility,Armstrong10,mot:val,3oracles}.

Indifference methods share three key features: first, they aim to indirectly ensure some key safety or control feature within the agent (such as the ability to be turned off).
Secondly, they rely on relatively simple manipulations of the agent's reward -- manipulations that could be carried out on a complex reward that humans couldn't fully understand \citep{DBLP:journals/corr/ZahavyBM16}.
And thirdly, they function by making the agent \emph{indifferent} to some key feature of the environment.
This indifference would persist even if the agent was much more capable that its controllers, meaning they could be used as tools for controlling agents of arbitrary power and intelligence \citep{HutterExplosion} \citep{superI} \citep{DBLP:journals/corr/GraceSDZE17}.

This paper aims to clarify these indifference methods and make them available for general use, individually or in combination. All the methods we present here aim to accomplish at least one of the following goals:
\begin{enumerate}
\item \textbf{Event-dependent rewards}. To make an agent's actual reward $R_i$ be dependent on events $X_i$, without the agent being motivated to manipulate the probability of the $X_i$.
\item \textbf{Effective disbelief}. To make an agent behave as if an event $X$ would never happen.
\item \textbf{Seamless transition}. To make an agent transition seamlessly from one type of behavior to another, remaining indifferent to the transition ahead of time.
\end{enumerate}

After a brief section to setup the notation, this paper addresses each goal in its own section.
All proofs are presented in \autoref{appendix:proof}.

\subsection{Illustrative example}\label{drink:example}
The methods will be illustrated throughout with a single running example.

We manage a large concert venue where alcoholic drinks are sold.
Concerts attract both adults and teenagers all of whom want to drink, but serving alcohol to anyone below 18 years of age is illegal in this country.
To identify who is/isn't allowed to drink, attendees may or may not be given wristbands saying ``18 AND OVER'' as they enter the venue.

There is a robot at the entrance who, upon seeing a new attendee, either gives them a wristband immediately, denies them a wristband, or asks to see their ID and only gives a wristband if they prove they're an adult.
One percent of the time, the attendee will then be randomly selected to have their ID checked by a human (this is an example of economical human feedback \citep{Christiano2017deep}).
Later in the evening this same robot will work at the bar, serving drinks to attendees who are wearing wristbands.

When we let this robot lose into the world we notice some highly undesirable behaviour -- the robot is giving everybody wristbands so that it can get more reward by selling them drinks later! This example will be formalised later in the paper, with the rewards defined explicitly.

Our problem is that we want our robot's reward to depend on the event $X =$ \textit{`the customer has a wristband'}, but we \textit{don't} want this dependence motivating our robot to manipulate the probability of $X$. That is, we want our agent to be \textit{indifferent} to $X$.
%
%


\section{Definitions: world models and events}\label{setup:properties}

\subsection{World models}
The indifference methods will be described within a variant of the POMDP (Partially Observable Markov Decision Process) format\footnote{
Though the methods are easily portable to other formalisms.
}.
These variants can be called \emph{world models}, similarly to \citet{hadfield2017inverse}, and are POMDPs without reward functions.
For any set $S$, let $\Delta(S)$ denote a space of probability distributions over $S$.

A world model consists of $\mu=\{\St,\Ob,\Ac,O,T,T_0, n\}$, where the set of states the agent can find itself in is $\St$, the set of observations the agent can make is $\Ob$, and the set of actions the agent can take in any state is $\Ac$.

The transition function $T$ takes a state and an action and gives a probability distribution over subsequent states: $T:\St\times\Ac\to\Delta (\St)$.
The function $T_0$ gives a probability distribution over the initial state $s_0$, $T_0\in\Delta(\St)$.
The function $O$ maps states to a probability distribution over possible observations: $O:\St\to\Delta(\Ob)$.
The integer $n$ is the maximal length (or duration) of the world model.

The agent starts in an initial state $s_0$, sampled from $T_0$.
On each turn, the agent gets an observation, chooses an action, and the world model is updated to a new state via $T$, where the agent gets a new observation via $O$.
After $n$ turns, the agents interactions with the world will end.

An (observable) history $h_t$ of length $t$ is a sequence of observations and actions, starting with an initial observation $o_0$ and ending with another observation: $h=o_0a_0o_1a_1\ldots o_{t-1}a_{t-1}o_t$, with $o_i$ and $a_i$ being the $i$-th observations and actions.

Let $\Hi_t$ be the set of histories of length $t$.
The set of \emph{full histories} is $\Hin$ the set of histories of length $n$.
Let $\Hi = \cup_{t=0}^n \Hi_t$ be the set of all histories.

Let $\mathcal{R}$ be the set of \emph{reward functions} for the agent on $\mu$.
Each $R\in\mathcal{R}$ is a map from $\Hin$ to $\mathbb{R}$. This non--standard definition is necessary for some indifference methods\footnote{
This $\mathcal{R}$ includes rewards $R'$ defined on all histories -- just define $R(h_m) = \sum_{i=0}^m R'(o_0\ldots o_{i})$.
This $R'$ could also be a typical  Markovian reward, in which case $R(h_m)=\sum_{i=0}^m R'(o_{i})$.
}.

The agent chooses its actions by using a policy $\pi:\Hi\to\Delta(\Ac)$, which maps its history to a distribution over actions.
Let $\Pi$ be the set of all policies.

Since $\pi$ determines actions, and $\mu$ determines states and observations, together, they generate conditional probability distributions $\mu(h' \mid h, \pi)$ for any histories $h$ and $h'$. This probability is always $0$ if $h'$ is not a continuation of $h$.

\subsubsection{The agent's own probability}

In this paper, it will be assumed that the agent knows and uses the true $\mu$.
In situations where the agent's estimate of $\mu^*$ differs from the true $\mu$, it's important that all the methods presented here be done in $\mu^*$ rather than $\mu$.

\subsection{Riggable and unriggable events}
The discussion of indifference will rely on a couple more definitions.
Suppose we wanted the agent to behave differently, conditional on some event.
To do that, we need to define `events'.
In our world models, an event $X$ is characterized by its \emph{indicator variable} $I_X$; (see \autoref{indicator:appendix} for a more full discussion on these).
On a world model, $I_X$ can be defined as:
\begin{definition}[Indicator variable]
The indicator variable $I_X$ is a map from $\Hin$, the set of full histories, to the interval $[0,1]$.
\end{definition}
$I_X(h_n)$ can be interpreted as the probability that $X$ happened in history $h_n$. If $I_X(h_n) = 1$, then $X$ definitely happened in this history, and if $I_X(h_n) = 0$ then $X$ definitely did not happen happen. $0 < I_X(h_n) < 1$ means it is uncertain whether $X$ happened or not.

Since $I_X$ maps complete histories onto real numbers, it is technically a reward function, and will often be treated as one.
On incomplete histories (histories of length less than $n$), $I_X$ is a random variable:
\begin{restatable}{theorem}{indicatortheorem}\label{indicator:theorem}
Given a policy $\pi$, the expectation of $I_X$ is well-defined on any history $h\in\Hi$.
Designate this expectation by $I_X^\pi(h)$.
\end{restatable}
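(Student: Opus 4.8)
The plan is to recognise that this statement is essentially the claim that a bounded random variable has a well-defined conditional expectation on a finite-horizon probability space; almost all of the content is in setting up that probability space correctly, after which boundedness does the rest.

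First I would fix the policy $\pi$ and recall, as already noted in the excerpt, that $\pi$ and $\mu$ together determine conditional probabilities $\mu(h' \mid h, \pi)$, which vanish unless $h'$ is a continuation of $h$. Restricting to $h' \in \Hin$, I would argue that $h' \mapsto \mu(h' \mid h, \pi)$ is a genuine probability distribution on $\Hin$: the horizon is exactly $n$, so every run from $h$ reaches length $n$, and the one-step extension probabilities — obtained by marginalising $T$ and $O$ over the belief state that $h$ induces — sum to one. Call this distribution $P_h^\pi \in \Delta(\Hin)$. For a history $h$ that has probability zero under $\pi$ from the start, the belief state is not defined by Bayesian updating, so I would fix a convention (e.g. build $P_h^\pi$ from the same one-step formulas using an arbitrary fixed prior over states consistent with $h$, or inherit $I_X^\pi(h)$ from a reference policy), observing that the indifference constructions appearing later only ever evaluate $I_X^\pi$ on histories of positive probability, so the choice is immaterial.

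With $P_h^\pi$ in hand I would define
\[
I_X^\pi(h) \;:=\; \expect_{h_n \sim P_h^\pi}\big[\, I_X(h_n)\,\big] \;=\; \int_{\Hin} I_X(h_n)\; dP_h^\pi(h_n),
\]
which is a finite sum when $\St,\Ob,\Ac$ are finite. Since $I_X$ maps $\Hin$ into $[0,1]$ it is a bounded random variable (and, if one wants non-discrete $\St$ or $\Ob$, a measurable one — either imposed as a standing assumption on indicator variables or read off from the product $\sigma$-algebra on $\Hin$), so the integral exists, is finite, and lies in $[0,1]$; hence $I_X^\pi(h)$ is well-defined. When $h \in \Hin$ already, $P_h^\pi$ is the point mass at $h$ and $I_X^\pi(h) = I_X(h)$, consistent with treating $I_X$ itself as a reward function.

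The only real obstacles are the bookkeeping around zero-probability histories and, in the non-discrete case, the measurability of $I_X$; neither is deep, and both are really modelling choices rather than difficulties. As a consistency check — and because it is used implicitly later — I would also verify the tower property $I_X^\pi(h) = \expect_{h'}\big[\, I_X^\pi(h') \mid h\,\big]$ for $h'$ ranging over one-step extensions of $h$, which follows from the chain rule for $\mu(\cdot \mid \cdot, \pi)$ and yields the backward-induction (dynamic-programming) characterisation of $I_X^\pi$.
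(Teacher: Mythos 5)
Your proposal is correct and follows essentially the same route as the paper's own proof: fix $\pi$, note that $\mu(\cdot \mid h, \pi)$ gives a probability distribution over $\Hin$, and take the (finite, bounded) expectation of $I_X$ under it. The extra care you take with zero-probability histories, measurability, and the tower property goes beyond what the paper writes down but does not change the argument.
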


In general this expectation will depend on $\pi$, meaning the agent can affect the probability of $X$ through its own actions.
For some $X$, called unriggable, the agent cannot affect their probability:
\begin{definition}[Unriggable]\label{def:unrig}
The event (indicator function) $X$ ($I_X$) is unriggable if the expectation of $I_X$ is independent of policy; meaning for any $h\in\Hi$ and $\pi,\pi'\in\Pi$,
\begin{align*}
I_X^\pi(h) = I_X^{\pi'}(h).
\end{align*}
\end{definition}
When $X$ is unriggable we may refer to the term above as $I_X(h)$. See \citet{IIRL} for a more detailed treatment

\subsection{Drinking and assessing age}\label{d:a:POMDP}
We can now fomalise the example of \autoref{drink:example} into a world model.
We'll consider interactions with a single attendee.
The important initial state is their age, which will be denoted by $m$ (mature: old enough to drink) or $\neg m$ (not old enough to drink).

Initially, the attendee appears, and the robot will either give then a wristband (action $g$), not give them a wristband (action $\neg g$), or check their ID (action $i$).
Given either $g$ or $\neg g$, there is a $1\%$ chance that they will be ID'd by humans subsequently.
In that case, if the wristband was assigned incorrectly, it will be corrected and the robot will be penalised ($p$).

They then move on to the next state, which is either $w$ (having a wristband), $\neg w$ (not having a wristband), $w_p$ (having a wristband, robot penalised), and $\neg w_p$ (not having a wristband, robot penalised).
If they were ID'd by the robot or the human, then that state depends on their actual age; if they were not ID'd, that state depends on the robot's decision.

After that, the robot has the opportunity to give a drink to the attendee ($g$) or not give them one ($\neg g$).
They will end up either with a drink ($d$) or without one $(\neg d$).
In order to preserve the action space of the robot, if they choose $i$ at this point, then they will randomly give a drink or not with equal probability.
Thus the length of each episode in $\mu$ is $n=2$.

The robot can observe that the attendee looks mature ($l_m$) or doesn't ($\neg l_m$).
Thus $\mathcal{S}=\{m, \neg m, w, w_p, \neg w, \neg w_p, d, \neg d\}$, $\mathcal{O}= \{l_m, \neg l_m, w, w_p, \neg w, \neg w_p, d, \neg d\}$ and $\mathcal{A}=\{g,\neg g, i\}$.

\begin{figure}[h!tb]
	\centering
\begin{tikzpicture}
[node distance=4cm,
  thick,main node/.style={circle,
  fill=blue!20,
  draw,
  inner sep=0cm,
  text width = 1.14cm,
  align=center,
  font=\sffamily\small},
  action node/.style={circle,
  fill=red!20,
  draw,
  align=center,
  inner sep=0cm,
  text width = 0.7cm,
  font=\sffamily\small},
  empty node/.style={circle,
  opacity=.0,
  text opacity=1,
  align=center,
  inner sep = 0 cm,
  text width = 0 cm,
  font=\sffamily\small}]

  \def\di{2.0}
  \node[main node, label={[shift={(-0.3,0.0)}]left:$T_0(m)=1/2$}] at (0.,0.75) (o) {$m$};
  \node[main node, label={[shift={(-0.3,0.0)}]left:$T_0(\neg m)=1/2$}] at (0.,-0.75) (no) {$\neg m$};
  \node[action node] at (\di,2.7) (og) {$g$};
  \node[action node] at (\di,1.7) (oi) {$i$};
  \node[action node] at (\di,0.7) (ong) {$\neg g$};
  \node[action node] at (\di,-0.7) (nog)  {$g$};
  \node[action node] at (\di,-1.7) (noi)  {$i$};
  \node[action node] at (\di,-2.7) (nong)  {$\neg g$};
  \node[main node] at (2*\di,2.25) (wnid) {$w$};
  \node[main node] at (2*\di,0.75) (wid) {$w_p$};
  \node[main node] at (2*\di,-0.75) (nwid) {$\neg w_p$};
  \node[main node] at (2*\di,-2.25) (nwnid) {$\neg w$};
  \node[action node] at (3.5*\di,1) (dg) {$g$};
  \node[action node] at (3.5*\di,0.0) (di) {$i$};
  \node[action node] at (3.5*\di,-1) (dng) {$\neg g$};
  \node[main node] at (4.5*\di,1) (d) {$d$};
  \node[main node] at (4.5*\di,-1) (nd) {$\neg d$};

   \node[empty node] at (2.6*\di, 0) (dots) {};

  \draw[dotted] (-0.6,1.35) -- (0.6,1.35) -- (0.6,-1.35) -- cycle;
  \draw[dotted] (-0.6,-1.35) -- (0.6,-1.35) -- (-0.6,1.35) -- cycle;
  \node[outer sep=0pt,inner sep=0pt,anchor=south] at (0,1.55) {$o_0=l_m$};
  \node[outer sep=0pt,inner sep=0pt,anchor=north] at (0,-1.55) {$o_0=\neg l_m$};

  \path[every node/.style={font=\sffamily\small}]
	(o) edge[->] (og)
    	edge[->] (oi)
    	edge[->] (ong)
	(no) edge[->] (nog)
    	edge[->] (noi)
    	edge[->] (nong)
    (og) edge[->] (wnid)
    (oi) edge[->] (wnid)
    (ong) edge[->] (nwnid)
        edge[->, dotted] (wid)
    (nong) edge[->] (nwnid)
    (noi) edge[->] (nwnid)
    (nog) edge[->] (wnid)
        edge[->, dotted] (nwid)

    (dg) edge[->] (d)
    (dng) edge[->] (nd)
    (di) edge[->, dashed] (d)
        edge[->, dashed] (nd)
;
    \draw [<-] (dg) to (dots);
    \draw [<-] (di) to (dots);
    \draw [<-] (dng) to (dots);
	\draw [decoration={brace, amplitude=0.5 cm, raise=0.75 cm}, decorate] (2*\di,3) to (2*\di,-3);

\end{tikzpicture}
\caption{Giving a wristband then giving a drink to a single attendee.
That attendee is either old enough to drink ($m$) or not ($\neg m$), with equal probability;
the robot observes whether they look mature ($l_m$) or don't ($\neg l_m$).
The robot then choose to give them a wristband ($g$) or not ($\neg g$) or check their ID ($i$); there is a $1\%$ probability their ID will be checked subsequently anyway.
The attendee then ends up in four states dependent on their wristband state and whether their ID was checked; independently of this state, the robot can either give them a drink ($g$) and they end up with a drink ($d$), or not give it ($\neg g$) and end up without it ($\neg d$).
The third action $i$ now randomises between these two occurrences.
In the picture, dotted arrows represent transitions with $1/100$ probability, dashed arrows events with $1/2$ probability.
Solid arrows are either actions, or transitions with the majority of the the probability ($1$ or $99/100$, depending on whether the action node also has a dashed arrow).}
\label{drink:POMDP}
\end{figure}
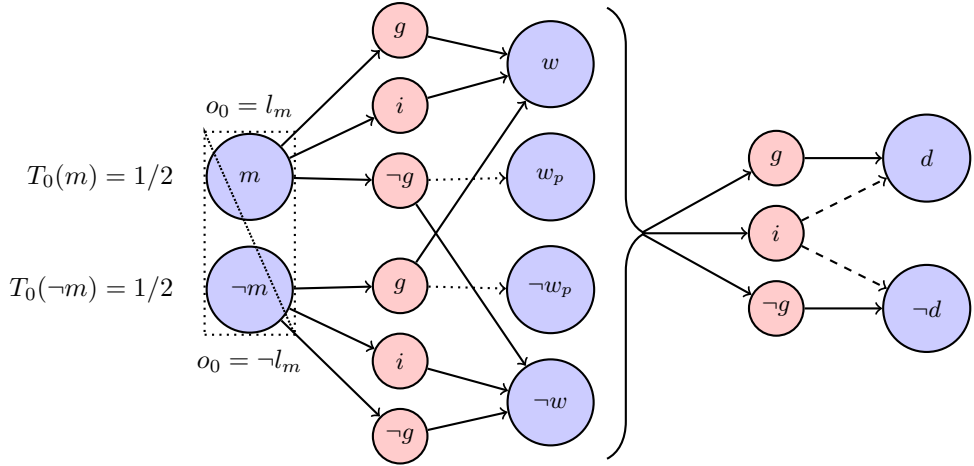

The initial distribution $T_0$ chooses $s_0=m$ and $s_0=\neg m$ with equal probability.
The observation distribution $O$ is mostly deterministic and trivial, but $O(l_m \mid m) = O(\neg l_m \mid \neg m) = 2/3$ (the attendee has $2/3$ probability of being the maturity they look).

The transition distribution $T$ is deterministic in most cases: $T(w \mid m, g)= T(w \mid m, i) = T(\neg w \mid \neg m, \neg g) = T(\neg w \mid \neg m, \neg i)=1$.
If $s_1$ is any of the four possible $\{w,\neg w, w_p, \neg w_p\}$, then $T(d\mid g,s_1)=T(\neg d\mid \neg g,s_1)=1$.
The $a_1=i$ randomises, so $T(d\mid s_1,i)=T(\neg d\mid s_1,i)=1/2$.

With $99\%$ probability, if the robot mis-assigned the wristband, the attendee's ID will not be checked; hence $T(w \mid \neg m, g)=T(\neg w \mid m, \neg g)= 99/100$.
With $1\%$ probability, the attendee's ID will be checked and corrected: $T(w_p \mid m, \neg g)=T(\neg w_p \mid \neg m, g)=1/100$.

\section{Event-dependent rewards}\label{reward:manipulation}
Sometimes we might want an agent's reward to be conditional on certain events (such as the reward for serving drinks being conditional on the customer having a wristband). Formally:

\begin{definition}\label{cond:rew}[Conditional reward]
The reward $R$ is $R_i$-conditional on the event $X_i$, if for any complete history $h_n$ with $I_{X_i}(h_n) = 1$, $R(h_n)=R_i(h_n)$.
\end{definition}

Below we will present three methods for constructing rewards conditional on events, in such a way as to avoid giving agents any incentive to manipulate the probability of those events.

\subsection{Compound rewards}

If we want an agent will to weigh its rewards according to the probability of some events $X_i$ we can use the \textit{compound reward} defined as:

\begin{definition}\label{strong:compound}[Compound reward]
Given unriggable events $\mathcal{X} = (X_0$, $X_1$, \ldots $X_l)$, the reward $R(\mathcal{X})$ is a $\mathcal{X}$-compound reward if, for any $h_n\in\Hi_n$, it can be written:
\begin{align}\label{compound:def}
R(\mathcal{X})(h_n) = I_{X_0}(h_n) R_0(h_n) + I_{X_1} (h_n) R_1(h_n) + \ldots + I_{X_l} (h_n) R_l(h_n)
\end{align}
If $\mathcal{X}$ is $(\neg X, X)$, we'll write $R(X)$ for $R(\mathcal{X})$.
\end{definition}
If the $X_i$ were not unriggable, the $R(\mathcal{X})$-maximising agent would be motivated to manipulate their probability; See \citep{Fallenstein} and \autoref{problem:proposition}.
After an illustrative example, the next two subsections will look at ways of constructing compound rewards from riggable events in such a way as to have the agent remain indifferent to these events.

\subsubsection{Rewards for drinking and assessing age}\label{d:a:rew}

We can now give examples of such rewards in the running example defined in \autoref{d:a:POMDP}.
We will define four events: $X_{w}$ is whether the attendee has a wristband or not, $X_{i}$ whether the robot itself asked for ID, $X_p$ is whether the robot received a penalty, and $X_{d}$ is whether the attendee gets a drink.
For simplicity, designate the indicator variables of these events by $I_w$, $I_i$, $I_p$ and $I_d$.
In terms of complete histories $h_2$, it's clear that $I_{w}(h_2)=1$ iff $o_1 \in \{w, w_p\}$, $I_i(h_2)=1$ iff $a_1=i$, $I_p(h_2)$ iff $o_1 \in \{w_p, \neg w_p\}$, and $I_d(h_2)=1$ iff $o_2=d$; otherwise they are $0$.
The converse events $X_{\neg w}$, $X_{\neg i}$, $X_{\neg p}$, and $X_{\neg d}$, and their indicator variables, are similarly defined.

We can now start defining the rewards of the robot.
First we have the reward for assessing the attendee correctly, and not hassling them for ID:
\begin{align}\label{rew:a}
R_a = I_p (-1) + I_{i}(-1) = -I_p -I_i.
\end{align}
And the compound reward for giving someone a drink iff they have a wristband, could be given by
\begin{align}\label{rew:d}
R_d(X_w) = I_dI_w(1) + I_{d}I_{\neg w}(-1) = I_dI_w + I_d(1-I_w)(-1) = I_d(2I_w -1).
\end{align}
Then a result that illustrates some of the problems here is:
\begin{restatable}{proposition}{problemproposition}\label{problem:proposition}
If the robot aims to maximise $R_a$, it will correctly give a wristband iff the the attendee seems mature ($o_0=l_m$), without ever asking for ID (so $a_0 \neq i$).

If the robot aims to maximise $R_d(X_w)$, it will correctly give a drink iff the attendee has a wristband.

But if the robot aims to maximise $R_a + R_d(X_w)$, then it will give a wristband \emph{all} attendees, and a drink to all those that still have it.
\end{restatable}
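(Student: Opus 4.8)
The plan is to exploit the fact that the world model of \autoref{d:a:POMDP} has horizon $n=2$ and becomes fully observable after the first step --- the observation function $O$ is deterministic and trivial on every state other than $m$ and $\neg m$ --- so that solving it reduces to a short backward-induction computation. A deterministic policy is here specified by a first action $a_0\in\{g,\neg g,i\}$ for each initial observation $o_0\in\{l_m,\neg l_m\}$, together with a second action $a_1$ for each reachable history $o_0a_0o_1$, where $o_1$ already pins down the state $s_1$; since the comparisons below will turn out to have no ties, it suffices to identify the unique expected-reward-maximising action at each decision point. First I would record the only hidden-state information the agent ever obtains: by Bayes' rule applied to $T_0$ and $O$, the posterior on age satisfies $\mu(m\mid l_m)=\mu(\neg m\mid\neg l_m)=2/3$.

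For $R_a=-I_p-I_i$ the second action is irrelevant, since $I_p$ is determined by $o_1$ and $I_i$ by whether $a_0=i$; so only $a_0$ matters. I would compute $\expect[R_a\mid\pi]$ conditional on each $o_0$ and each choice of $a_0$, using the transition probabilities --- in particular $T(w_p\mid m,\neg g)=T(\neg w_p\mid\neg m,g)=1/100$, together with the fact that action $i$ assigns the wristband correctly (so the penalised states $w_p,\neg w_p$ are never reached and $I_p=0$) while forcing $I_i=1$. For $o_0=l_m$ this gives expected values $-1/300$ for $g$, $-2/300$ for $\neg g$, and $-1$ for $i$, and the mirror image for $o_0=\neg l_m$. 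Comparing the three numbers yields the first claim: $a_0=g$ when $o_0=l_m$, $a_0=\neg g$ when $o_0=\neg l_m$, and never $a_0=i$.

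For $R_d(X_w)=I_d(2I_w-1)$, at the second step $I_w$ is already fixed by $o_1$ --- it equals $1$ on $o_1\in\{w,w_p\}$ and $0$ on $o_1\in\{\neg w,\neg w_p\}$ --- so the robot is simply choosing between $I_d=1$ (action $g$), $I_d=0$ (action $\neg g$) and $\expect[I_d]=1/2$ (action $i$); the maximiser is $g$ exactly when the attendee has a wristband and $\neg g$ otherwise, which is the second claim. For $R_a+R_d(X_w)$ I would merge the two analyses: the optimal second action is still ``give a drink iff the attendee has a wristband'', so the continuation value after the first step equals the probability that the attendee ends up with a wristband (from $R_d$) plus the contribution of $R_a$ found above. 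Substituting the transition probabilities then shows that, for either value of $o_0$, action $g$ attains expected value close to $1$ (namely $298/300$ or $296/300$), beating the value $0$ of $\neg g$ and the negative value of $i$, because the gain of roughly $1$ in the wristband probability from always handing out wristbands swamps the penalty of roughly $1/100$ it causes. Hence an optimal policy gives every attendee a wristband and then a drink to each attendee who still has one --- the third claim.

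The only real work is the bookkeeping in the middle two paragraphs: keeping straight the $1/100$ human-recheck branches, the four possible values of $s_1$, the randomising second action $i$, and the term-by-term expectations (legitimate by linearity of expectation and by Theorem~\ref{indicator:theorem}, which guarantees each $I_X^\pi$ is well defined). One should also check that ``correctly'' in the statement is about the on-path behaviour ``$a_0=g\iff o_0=l_m$'' and ``$a_1=g\iff o_1\in\{w,w_p\}$'' rather than about histories never reached under the policy, and confirm that none of the three comparisons is a tie, so that the optimal policy is pinned down exactly where the statement refers to it.
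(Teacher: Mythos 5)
Your proposal is correct and follows essentially the same route as the paper's proof: backward induction over the two-step model, explicit expected-value comparisons for each candidate first action, and the observation that the combined reward makes the $\approx 1$ gain from guaranteeing a wristband swamp the $\approx 1/100$ expected penalty. The only cosmetic difference is that you compare first actions conditional on each $o_0$ (getting $298/300$ and $296/300$ for $g$ versus $0$ for $\neg g$), whereas the paper compares the total values of the two candidate policies ($99/100$ versus $149/300$); the numbers are consistent and the conclusion identical.
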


This undesirable interaction, where the robot gives out more wristbands in order to sell more drinks later, can be avoided using the indifference methods presented below.

\subsection{Policy counterfactual}

One way to achieve indifference is to define the reward in terms of an unriggable event $Y$ which corresponds to ``$X$ \textit{would happen if, conterfactually, the agent always followed a default policy}''.

For any $\mu$, given a starting state $s_0=s$, a policy $\pi$, and a history $h$, one can define $\mu(h \mid s_0=s, \pi)$.
Consequently, define the probability that $X$ occurs, given the initial state $s$ and the policy $\pi$:
\begin{align}\label{IXs:def}
I_{X}(\pi, s, \mu) = \sum_{h_n\in\Hin} \mu(h_n \mid s_0=s, \pi) I_X(h_n).
\end{align}
Conversely, given a history $h$, we can get the probability of the initial state, $\mu(s_0=s \mid h)$ by using Bayes' rule.
Now define the counterfactual indicator:
\begin{align}\label{policy:count}
I_Y(h) = \sum_{s \in \St} \mu(s_0=s \mid h) I_{X}(\pi_0, s, \mu) = \sum_{s \in \St} \mu(s_0=s \mid h) \sum_{h_n\in\Hin} \mu(h_n \mid s_0=s, \pi) I_X(h_n).
\end{align}
Importantly:
\begin{restatable}{theorem}{policytheorem}\label{policy:theorem}
The $I_Y(h)$ in \autoref{policy:count} defines an unriggable event $Y$.
\end{restatable}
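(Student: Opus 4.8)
The plan is to prove unriggability directly: show that the policy--expectation $I_Y^\pi(h)$, which exists by \autoref{indicator:theorem}, equals the right--hand side of \autoref{policy:count} evaluated at $h$ for \emph{every} $\pi$, and that this common value does not involve $\pi$. Throughout write $\mu(s_0=s\mid h)$ for the Bayesian posterior on the initial state given $h$, and abbreviate $q(s):=I_X(\pi_0,s,\mu)$, a fixed real number (it depends only on the default policy $\pi_0$ and on $\mu$). Since $\Hin$ and $\St$ are finite, the quantity to analyse is
\[
I_Y^\pi(h) \;=\; \sum_{h_n\in\Hin}\mu(h_n\mid h,\pi)\,I_Y(h_n) \;=\; \sum_{h_n\in\Hin}\mu(h_n\mid h,\pi)\sum_{s\in\St}\mu(s_0=s\mid h_n)\,q(s),
\]
and after swapping the two finite sums it suffices to prove $\sum_{h_n}\mu(h_n\mid h,\pi)\,\mu(s_0=s\mid h_n)=\mu(s_0=s\mid h)$ for each $s$, together with the observation that the right--hand side is itself independent of $\pi$.

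The first step, and the conceptual crux, is the remark that the posterior on the initial state is policy--independent. By Bayes' rule $\mu(s_0=s\mid h)=\mu(h\mid s_0=s,\pi)\,T_0(s)\big/\sum_{s'}\mu(h\mid s_0=s',\pi)\,T_0(s')$, and in $\mu(h\mid s_0=s,\pi)$ the policy enters only through factors $\pi(a_t\mid h_t)$ that depend on the observed prefix of $h$ but not on the hidden state trajectory; these factors are common to every term $s'$ in the denominator, so they cancel against the numerator. Hence both $\mu(s_0=s\mid h_n)$ (as used to define $I_Y$) and $\mu(s_0=s\mid h)$ are functions of $\mu$ alone. In particular this already shows $I_Y$ is a legitimate indicator variable: its values are convex combinations of the $q(s)\in[0,1]$, hence lie in $[0,1]$.

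The second step is the identity $\sum_{h_n}\mu(h_n\mid h,\pi)\,\mu(s_0=s\mid h_n)=\mu(s_0=s\mid h)$, which is just the consistency (tower property) of Bayesian updating: under the joint law on (initial state, full history) induced by $\mu$, $\pi$ and the prior $T_0$, the histories $h_n$ extending $h$ partition the event $\{H=h\}$, so $\mathbb{E}\big[\,\mathbb{E}[\mathbf{1}_{s_0=s}\mid H_n]\;\big|\;H=h\big]=\mathbb{E}[\mathbf{1}_{s_0=s}\mid H=h]$, which unwinds to exactly the stated sum. Combining with the first step, every quantity in sight is policy--independent, so $I_Y^\pi(h)=\sum_{s}\mu(s_0=s\mid h)\,q(s)$ for all $\pi$, which is \autoref{policy:count} itself; therefore $I_Y^\pi(h)=I_Y^{\pi'}(h)$ for all $\pi,\pi'$ and $Y$ is unriggable by \autoref{def:unrig}.

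The main obstacle I anticipate is bookkeeping rather than mathematics: being careful about histories $h$ that have probability zero under the particular $\pi$ (where $\mu(\cdot\mid h,\pi)$ needs a convention, already implicitly fixed by \autoref{indicator:theorem}), and making the tower step rigorous as an explicit finite sum over state trajectories rather than by invoking abstract conditional expectation. Neither affects the substance: the whole statement rests on the single fact that the policy acts only through action--probabilities, which factor out of the initial--state posterior.
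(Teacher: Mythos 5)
Your proof is correct and rests on the same two observations as the paper's own argument: that $I_{X}(\pi_0,s,\mu)$ is a fixed constant once $\pi_0$ is chosen, and that the posterior $\mu(s_0=s\mid h)$ over the initial state is policy-independent because the action-probability factors cancel in Bayes' rule. You are in fact somewhat more careful than the paper, which asserts the policy-independence of the formula for $I_Y(h)$ directly but never explicitly verifies, via your tower-property step $\sum_{h_n}\mu(h_n\mid h,\pi)\,\mu(s_0=s\mid h_n)=\mu(s_0=s\mid h)$, that this formula coincides with the expectation $I_Y^\pi(h)$ appearing in the definition of unriggability; your version closes that small gap.
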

See papers \citet{IIRL} for more details\footnote{
Or see \url{https://www.lesswrong.com/posts/upLot6eG8cbXdKiFS/reward-function-learning-the-learning-process}.
}, where the $I_Y$ have the stronger property of \emph{uninfluenceable}.

So, finally, we may define the policy counterfactual as:
\begin{definition}\label{pol:count}[Policy counterfactual]
Given $R_0$, $R_1$, an event $X$ that might be riggable, and a default policy $\pi_0$, the policy counterfactual agent is one with compound reward
\begin{align*}
R(Y)=I_Y R_0 + (1-I_Y) R_1,
\end{align*}
where $I_Y$ is defined by $\mu$ and $X$ via \autoref{IXs:def} and \autoref{policy:count}.
\end{definition}
The definition is dependent on the specific $\mu$; but there are $\mu'$ which are equivalent for the agent, and using such a $\mu'$ for $I_Y$ would also work; see \citet{armstrong2018counterfactual}.

Note that, in general, the reward generated by this approach is not $X$-conditional according to \autoref{cond:rew}. It will be $Y$-conditional, and, though $Y$ was constructed from $X$, they are not the same.

\subsubsection{Policy counterfactual example}

Recall our drink-serving robot. We will use the notation of \autoref{d:a:rew}. To correct the distortinary behaviour using the policy counterfactual, we define the event $Y$ as ``\textit{the customer would have a wristband if we always checked for ID}'', so our counterfactual policy is $\pi_0 = (i,i)$ (always perform an ID check).
This policy is far from ideal -- its expectation for $R_a + R_d(X_w)$ is $-1$.

However, if we use the event $X_w$ (the attendee has a wristband) and compute the $Y$ via \autoref{policy:count}, then it's easy to see that $\mu(h_2 \mid s_0 = m, \pi_0) I_w(h_2) = 1$ (if the robot asks for ID from a mature attendee, they will get a wristband) and that $\mu(h_2 \mid s_0 = \neg m, \pi_0) I_w(h_2) = 0$ (if the robot asks for ID from an immature attendee, they will not get a wristband), see \autoref{drink:POMDP}.

Therefore the counterfactually defined $Y$ is $I_Y(h)=\mu(s_0 = m \mid h)$.
Then:
\begin{restatable}{proposition}{policyproposition}\label{policy:proposition}
If the robot aims to maximise the reward
\begin{align*}
R_a + R_d(Y) = -I_p - I_i + I_d I_{Y} - I_d I_{\neg Y},
\end{align*}
then it will give wristbands and drinks iff it believes the attendee is mature.
\end{restatable}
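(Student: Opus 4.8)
I would compute the reward-maximising policy outright, by backward induction over the length-$2$ horizon, after first rewriting the objective transparently. The starting point is the identity $I_Y(h) = \mu(s_0 = m \mid h)$ recorded just above the statement; it holds because $\pi_0 = (i,i)$ always assigns the wristband correctly, so $I_X(\pi_0, m, \mu) = 1$ and $I_X(\pi_0, \neg m, \mu) = 0$. Two further observations make the bookkeeping cheap. First, the posterior $\mu(s_0 = m \mid h)$ does not depend on $\pi$: in this formalism every action is observed, so the factors $\pi(a_t \mid h_t)$ appearing in $\mu(s_0 = m, h)$ are common to both values of $s_0$ and cancel under Bayes' rule --- which is exactly what makes $Y$ a well-defined unriggable event, cf.\ \autoref{policy:theorem}. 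Second, since $I_d(h_2) = 1$ precisely when $o_2 = d$, a function of $h_2$, the law of iterated expectations gives $\expect_\pi[I_d I_Y] = \expect_\pi[\mathbf{1}[o_2 = d]\,\mu(s_0 = m \mid h_2)] = P_\pi(o_2 = d,\, s_0 = m)$ and likewise $\expect_\pi[I_d I_{\neg Y}] = \expect_\pi[I_d] - \expect_\pi[I_d I_Y] = P_\pi(o_2 = d,\, s_0 = \neg m)$. Hence, by linearity of expectation, the objective collapses to
\begin{align*}
\expect_\pi[R_a + R_d(Y)] = -\expect_\pi[I_p] - \expect_\pi[I_i] + P_\pi(o_2 = d,\, s_0 = m) - P_\pi(o_2 = d,\, s_0 = \neg m).
\end{align*}
The conceptual point is already visible: in expectation $R_d(Y)$ pays $+1$ for a drink served to a mature attendee and $-1$ for one served to an immature attendee, with \emph{no dependence on the wristband}, so the rigging incentive of \autoref{problem:proposition} has vanished; it remains to read off the optimum.

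Next I would run the backward induction, whose sufficient statistic after any history $h$ is the scalar belief $\mu(s_0 = m \mid h)$ (here $o_1$ pins down $s_1$, so this belief is a function of the observable history and the continuation is a genuine finite DP). \emph{Drink stage.} Only the last two terms depend on $a_1$, and conditional on $h_1 = o_0 a_0 o_1$ their contribution is $P(o_2 = d \mid h_1, a_1)\,(2\mu(s_0 = m \mid h_1) - 1)$, where $P(o_2 = d \mid h_1, a_1)$ is $1$, $\tfrac{1}{2}$, $0$ according as $a_1$ is $g$, $i$, $\neg g$. So the optimal $a_1$ serves the drink ($g$) iff $\mu(s_0 = m \mid h_1) > \tfrac{1}{2}$ and refuses it ($\neg g$) iff $\mu(s_0 = m \mid h_1) < \tfrac{1}{2}$, i.e.\ iff the robot currently believes the attendee mature. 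Enumerating the reachable $h_1$ in \autoref{drink:POMDP} (Bayes applied to $o_0 \in \{l_m, \neg l_m\}$ and then to $o_1$) shows this belief is always one of $0, 1, \tfrac{1}{3}, \tfrac{2}{3}, \tfrac{200}{299}, \tfrac{99}{149}, \tfrac{50}{149}, \tfrac{99}{299}$, never $\tfrac{1}{2}$, so the choice is strict and $a_1 = i$ is never optimal. \emph{Wristband stage.} Substituting this continuation, the value of $a_0$ after $o_0$ is $-(\text{penalty}) - \mathbf{1}[a_0 = i] + \sum_{o_1} P(o_1 \mid o_0, a_0)\,(2\mu(s_0 = m \mid o_0 a_0 o_1) - 1)^+$, where the penalty is $\tfrac{1}{100}\mu(s_0 = \neg m \mid o_0)$ for $a_0 = g$, $\tfrac{1}{100}\mu(s_0 = m \mid o_0)$ for $a_0 = \neg g$, and $0$ for $a_0 = i$. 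Evaluating this finite comparison (the stage-belief being $\tfrac{2}{3}$ after $l_m$ and $\tfrac{1}{3}$ after $\neg l_m$) gives values, in units of $\tfrac{1}{300}$, of $100, 98, -100$ for $g, \neg g, i$ when $o_0 = l_m$, and $-2, 0, -200$ for $g, \neg g, i$ when $o_0 = \neg l_m$. Thus $a_0 = g$ strictly wins when the attendee looks mature and $a_0 = \neg g$ strictly wins otherwise, while $a_0 = i$ loses because it pays the full $-1$ hassle cost whereas its informational value for the drink term is worth strictly less than $1$. Together with the drink-stage rule this is precisely the claim: the robot gives a wristband iff its posterior belief in maturity exceeds $\tfrac{1}{2}$, and then gives a drink iff that belief still exceeds $\tfrac{1}{2}$.

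The load-bearing step is this final finite case analysis, and in particular checking that the winner beats both alternatives \emph{strictly}: the gap between $g$ and $\neg g$ is only of order $\tfrac{1}{100}$ and is set jointly by the drink term and the penalty term $I_p$, so one cannot be sloppy there; and one must confirm that no reachable belief equals $\tfrac{1}{2}$, which is what makes the optimal policy unique rather than merely weakly optimal. Everything else is routine expectation bookkeeping. One subtlety to watch: the argument needs $-I_i$ to penalise the \emph{entrance} ID-check $a_0 = i$ (consistent with $R_a$ being the reward for ``assessing correctly and not hassling for ID''); were it instead to penalise $a_1 = i$, the hassle cost would be vacuous and the optimum would change to ``always check ID first'', though the displayed conclusion --- wristbands and drinks iff believed mature --- would still hold.
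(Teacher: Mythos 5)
Your proof is correct and follows the same backward-induction skeleton as the paper's: both first reduce the drink-stage decision to the threshold rule ``serve iff $\mu(s_0=m\mid h_1)>1/2$'' using the unriggability of $I_Y$ and the fact that $o_2$ carries no further information about $s_0$, and both then compare the three first actions. The one genuine difference is how that first-stage comparison is settled. The paper avoids computing the $R_d(Y)$-values of $g$ and $\neg g$ explicitly: it observes that (given $o_0=l_m$) $\neg g$ is \emph{non-informative} (the optimal $a_1$ is $g$ whatever $o_1$ turns out to be) while $g$ is informative, invokes the non-negative value of information for a Bayesian agent to conclude $g$ is weakly better on $R_d(Y)$, and then breaks the tie with the strict $R_a$-advantage of $g$ (penalty probability $1/300$ vs.\ $2/300$); $i$ is eliminated by a rough bound. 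You instead enumerate all six action values exactly ($100, 98, -100$ and $-2, 0, -200$ in units of $1/300$ --- these check out against the model of \autoref{drink:POMDP}), which buys you strictness and uniqueness of the optimum for free and also verifies that no reachable posterior equals $1/2$, a point the paper leaves implicit. The value-of-information route involves less arithmetic and is robust to perturbing the model parameters; your route is more self-contained and auditable. Your side remark about the indexing of $I_i$ is also well taken: the formal definition in \autoref{d:a:rew} reads ``$I_i(h_2)=1$ iff $a_1=i$,'' but the paper's own proofs (of \autoref{problem:proposition} and of this proposition) charge the $-1$ to the entrance action $a_0=i$, so your reading matches the intended model.
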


\subsection{Causal counterfactual}

The policy counterfactual is a good approach when we have a suitable default policy.
However, it is not useful if we want to allow humans to use $X$ to have actual control over the agent, in the actual world and not a counterfactual one.
Better to start with a riggable $X$, a reward conditional on $(X,\neg X)$ as in \autoref{cond:rew} -- but still ensure that the agent doesn't try to manipulate it.

The causal counterfactual does this by using auxiliary events $Y_1$ and $Y_0$.
The intuition is that these events are unriggable, but the agent is unable to distinguish $Y_1$ ($Y_0$) from $X$ ($\neg X$).

\begin{definition}\label{cause:count}[Causal counterfactual]
Given an event $X$ and rewards $R_0$ and $R_1$, a causal counterfactual reward consists of \emph{unriggable} events $Y_0$, $Y_1$, and reward $R(Y_0,Y_1)$, such that:
\begin{itemize}
\item For all $h$, $I_{Y_1}(h) \leq \min_\pi I_X^\pi(h)$ and $I_{Y_0}(h) \leq \min_\pi (1-I_X^\pi(h)) = \min_\pi (I_{\neg X}^\pi(h))$.
\item The pair $I_{Y_0}$ and $R_0$ are independent as random variables, as are $I_{Y_1}$ and $R_1$.
\item The sum $I_{Y_0}(h) + I_{Y_1}(h)$ is non-zero on all histories $h$.
\item $R(Y_0,Y_1) = I_{Y_0}R_0 + I_{Y_1}R_1$.
\end{itemize}
\end{definition}
Then the value of the riggable $X$ will determine the maximising policy for $R(Y_0,Y_1)$:
\begin{restatable}{theorem}{causaltheorem}\label{causal:theorem}
If $R(Y_0, Y_1)$ is a causal counterfactual reward for $X$, $R_0$, and $R_1$, then:
\begin{itemize}
\item If $\min_\pi I_X^\pi(h) = 1$, $R(Y_0,Y_1)$-maximising agents follow a policy which maximises $R_1$.
\item If $\min_\pi I_{\neg X}^\pi(h) = 1$, $R(Y_0,Y_1)$-maximising agents follow a policy which maximises $R_0$.
\end{itemize}
\end{restatable}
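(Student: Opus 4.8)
The plan is to show that, conditioned on any history $h$, the $\pi$-expected value of $R(Y_0,Y_1)$ is a \emph{policy-independent} non-negative multiple of the $\pi$-expected value of either $R_1$ (in the first case) or $R_0$ (in the second), so that the two maximisation problems coincide. First I would fix a history $h$ and an arbitrary policy $\pi$ and expand, using linearity of expectation,
\begin{align*}
\expect[R(Y_0,Y_1)\mid h,\pi] = \expect[I_{Y_0}R_0\mid h,\pi] + \expect[I_{Y_1}R_1\mid h,\pi].
\end{align*}
Because $Y_0$ and $Y_1$ are unriggable, \autoref{indicator:theorem} and \autoref{def:unrig} give $\expect[I_{Y_j}\mid h,\pi] = I_{Y_j}(h)$, a number not depending on $\pi$; and because $I_{Y_j}$ and $R_j$ are independent as random variables (under the distribution $\mu(\cdot\mid h,\pi)$ induced by any $\pi$), the expectation of each product factors, yielding
\begin{align*}
\expect[R(Y_0,Y_1)\mid h,\pi] = I_{Y_0}(h)\,\expect[R_0\mid h,\pi] + I_{Y_1}(h)\,\expect[R_1\mid h,\pi].
\end{align*}

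Next I would dispose of one coefficient in each case. Suppose $\min_\pi I_X^\pi(h) = 1$. Then $\min_\pi\bigl(1-I_X^\pi(h)\bigr) = 0$, so the first condition of \autoref{cause:count} forces $I_{Y_0}(h)\le 0$; since indicator variables lie in $[0,1]$, this gives $I_{Y_0}(h)=0$, and then the hypothesis that $I_{Y_0}(h)+I_{Y_1}(h)\neq 0$ forces $I_{Y_1}(h)>0$. Hence $\expect[R(Y_0,Y_1)\mid h,\pi] = I_{Y_1}(h)\,\expect[R_1\mid h,\pi]$ with $I_{Y_1}(h)$ a strictly positive constant, so a policy maximises the expected $R(Y_0,Y_1)$-value from $h$ exactly when it maximises the expected $R_1$-value from $h$, which is the first bullet (and, applied at the initial history, it propagates to all reachable histories since $\min_\pi I_X^\pi = 1$ is inherited by continuations). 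The second bullet is the mirror image: $\min_\pi I_{\neg X}^\pi(h)=1$ gives $\min_\pi I_X^\pi(h)=0$, hence $I_{Y_1}(h)=0$ and $I_{Y_0}(h)>0$, so the expected reward is the positive multiple $I_{Y_0}(h)\,\expect[R_0\mid h,\pi]$ of the expected $R_0$-value.

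I expect the only genuinely delicate point to be the factorisation $\expect[I_{Y_j}R_j\mid h,\pi] = I_{Y_j}(h)\,\expect[R_j\mid h,\pi]$, which silently combines two distinct facts — unriggability (to identify the marginal $\expect[I_{Y_j}\mid h,\pi]$ with the policy-free constant $I_{Y_j}(h)$) and the assumed independence of $I_{Y_j}$ and $R_j$ — and requires that the independence hypothesis be read as holding under \emph{every} policy-induced measure $\mu(\cdot\mid h,\pi)$, not merely one. The other subtlety, really just a bookkeeping remark, is that the hypothesis $I_{Y_0}(h)+I_{Y_1}(h)\neq 0$ is exactly what upgrades "non-negative multiple" to "strictly positive multiple"; without it the surviving coefficient could vanish and the conclusion would be vacuous. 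Everything else is routine.
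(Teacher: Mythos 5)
Your proposal is correct and follows essentially the same route as the paper's proof: use the bound $I_{Y_0}(h)\le\min_\pi(1-I_X^\pi(h))=0$ to kill one term, the non-vanishing of $I_{Y_0}(h)+I_{Y_1}(h)$ to make the surviving coefficient strictly positive, and unriggability plus independence to factor the remaining expectation into $I_{Y_1}(h)\cdot\expect[R_1\mid h,\pi]$. The only (immaterial) difference is that the paper first argues pointwise that $I_{Y_0}(h_n)=0$ on every continuation of $h$ with positive probability, so the $I_{Y_0}R_0$ term vanishes without invoking independence of that pair, whereas you factor both terms; your closing remarks about where the independence and non-degeneracy hypotheses are actually used are accurate.
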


\subsubsection{Causal counterfactual example}\label{d:a:causal}

Back to our drink serving robot -- again we use the notation of \autoref{d:a:rew} and the model of \autoref{drink:POMDP}.

To correct the distortionary behaviour using the causal counterfactual, define $Y_0$ as `the attendee has a valid ID, and has been ID-checked by the human', and $Y_1$ as `the attendee has no valid ID, and has been ID-checked by the human', (see \autoref{causal:count:def} for formal definitions of $I_{Y_0}$ and $I_{Y_1}$).
Then:
\begin{restatable}{proposition}{causalproposition}\label{causal:proposition}
If the robot aims to maximise the reward
\begin{align*}
R_a + R_d(Y_0,Y_1) = -I_p - I_i + I_d I_{Y_0} - I_d I_{Y_1},
\end{align*}
then it will give wristbands, and drinks, iff it believes the attendee is mature.
\end{restatable}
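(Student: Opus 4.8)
The plan is a backward induction on the two-step world model of \autoref{drink:POMDP}, using the formal definitions of $I_{Y_0},I_{Y_1}$ promised in \autoref{causal:count:def}. Two features of those indicators do all the work. First, each is already determined from the prefix $h_1=o_0a_0o_1$ of a full history $h_2=o_0a_0o_1a_1o_2$ (the later drink observation $o_2$ says nothing about the attendee's age or about whether the human rechecked), and, being the events ``attendee is (respectively, is not) mature, and the exogenous $1/100$ human recheck was triggered'' with that recheck sampled by nature independently of the robot's actions, each is unriggable in the sense of \autoref{def:unrig}; so we are in the situation of \autoref{cause:count} with $X=X_{\neg w}$, $R_0=I_d$, $R_1=-I_d$. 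Second, evaluating them shows $I_{Y_0}(h_1)-I_{Y_1}(h_1)>0$ exactly when $o_1\in\{w,w_p\}$ (the attendee has a wristband) and $<0$ exactly when $o_1\in\{\neg w,\neg w_p\}$: $Y_0$ can only have occurred on wristband histories and $Y_1$ only on non-wristband histories, and $0<\mu(s_0=m\mid o_0)<1$ keeps the relevant conditional probabilities strictly between $0$ and $1$.

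For the second (drink) action I would argue as follows. Given $h_1$, both $I_p$ and $I_i$ are already fixed, so $a_1$ influences only $\expect[I_d(I_{Y_0}-I_{Y_1})\mid h_1,a_1]$, which — since $I_{Y_0}-I_{Y_1}$ is constant on $h_1$ — equals $(I_{Y_0}(h_1)-I_{Y_1}(h_1))\,\expect[I_d\mid h_1,a_1]$ with $\expect[I_d\mid h_1,g]=1>\tfrac12=\expect[I_d\mid h_1,i]>0=\expect[I_d\mid h_1,\neg g]$. Hence an $R_a+R_d(Y_0,Y_1)$-maximiser gives a drink iff $I_{Y_0}(h_1)>I_{Y_1}(h_1)$, i.e.\ iff the attendee has a wristband. (This is \autoref{causal:theorem} specialised to $h_1$, where $\min_\pi I_{X_w}^\pi(h_1)=1$ on $\{w,w_p\}$ and $\min_\pi I_{\neg X_w}^\pi(h_1)=1$ on $\{\neg w,\neg w_p\}$.)

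For the first (wristband) action, fold in this optimal $a_1$ and write $p=\mu(s_0=m\mid o_0)$. The crucial computation is that the expected drink reward, $\expect[(I_{Y_0}(h_1)-I_{Y_1}(h_1))^{+}\mid o_0,a_0]$, collapses to $\mu(\text{attendee mature}\wedge\text{human rechecked}\mid o_0)=\tfrac1{100}p$ for \emph{each} of $a_0\in\{g,\neg g,i\}$; this is precisely the indifference property, and it removes the incentive found in \autoref{problem:proposition} to give everyone a wristband. What remains is $-\expect[I_p]-\expect[I_i]$, equal to $-\tfrac1{100}(1-p)$ for $a_0=g$, to $-\tfrac1{100}p$ for $a_0=\neg g$, and to $-1$ for $a_0=i$. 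So $V(o_0,g)=\tfrac1{100}(2p-1)$, $V(o_0,\neg g)=0$, $V(o_0,i)=\tfrac1{100}p-1$, and since $\mu(s_0=m\mid l_m)=\tfrac23$ and $\mu(s_0=m\mid\neg l_m)=\tfrac13$ the unique best first action is $g$ after $o_0=l_m$ and $\neg g$ after $o_0=\neg l_m$. Combining the stages, the unique optimal policy gives a wristband iff $o_0=l_m$ and then a drink iff the attendee has a wristband, and a final check that the four on-policy histories $h_1$ are exactly those on which ``attendee has a wristband'' and ``$\mu(s_0=m\mid h_1)>\tfrac12$'' coincide yields the stated equivalence.

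I expect the main obstacle to be the first-action computation above: one has to push the small-probability ``corrected'' branches ($w_p$, $\neg w_p$) through the expectations and verify the cancellation that makes $\expect[\text{drink reward}\mid o_0,a_0]$ independent of $a_0$ — the cancellation being the whole reason the causal counterfactual works. Subsidiary care is needed to formalise $I_{Y_0},I_{Y_1}$ so that they are genuinely unriggable (``ID-checked by the human'' must mean the exogenous $1/100$ event, not something the robot's first action can switch on or off) and to observe that the identification of ``has a wristband'' with ``believes the attendee mature'' holds only along the optimal policy.
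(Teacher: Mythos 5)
Your proposal is correct and follows essentially the same route as the paper's proof: backward induction, with the second action reduced to \autoref{causal:theorem} (drink iff wristband, since one of $I_{X_w},I_{X_{\neg w}}$ is $1$ at $h_1$), and the first action shown not to affect the expected drink reward so that only $R_a$ matters there. The only difference is presentational -- you verify the first-stage cancellation by explicit computation ($\expect[(I_{Y_0}-I_{Y_1})^+\mid o_0,a_0]=\tfrac{1}{100}p$ for all $a_0$, then $V(o_0,g)=\tfrac{1}{100}(2p-1)$, etc.), whereas the paper derives the same indifference abstractly from the unriggability of $Y_0,Y_1$ and their independence from $I_d$, and then reuses the $R_a$-analysis of \autoref{problem:proposition}.
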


\section{Effective disbelief}\label{act:notX}

Sometimes, we might want an agent to act as if it believed an unriggable event $Z$ could never happen. Consider how an agent would behave if it believed a coin was perfectly biased to only ever land on tails. If $Z = $ ``\textit{The coin lands heads}'' then this agent would be willing to bet against $Z$ at any odds.
Compare this to an agent who isn't certain the coin will land tails, but whose reward is fixed to a constant value whenever the coin lands heads.
This agent would be equally willing to bet on tails at any odds (since from it's perspective when the coin lands heads the bet becomes irrelevant anyway).

This is more than a simple analogy.
If the agent disbelieves in an unriggable $Z$, they will `update' on the fact that $Z$ can't happen -- multiplying their old probability distribution over histories $\mu(h)$ by $I_{\neg Z}(h)$ and then re-normalizing.
So their expected reward according to $R$, given $h_t\in\Hi$, $\pi\in\Pi$, is:
\begin{align*}
V_{\neg Z}(R,\pi,h_t) =& \frac{\sum_{h_n\in\Hin} R(h_n)I_{\neg Z}(h_n)\mu (h_n \mid h_t, \pi)}{\sum_{h_n\in\Hin} I_{\neg Z}(h_n)\mu (h_n \mid h_t, \pi)} \\
=& \frac{\sum_{h_n\in\Hin} R(h_n)I_{\neg Z}(h_n)\mu (h_n \mid h_t, \pi)}{I_{\neg Z}(h_t)},
\end{align*}
since $\neg Z$ is unriggable. Compare this to a standard agent with reward $cI_{Z} + I_{\neg Z}R$ for constant $c$ -- for whom the value of a policy is:
\begin{align*}
V(cI_{Z} + RI_{\neg Z},\pi, h_t) =& \sum_{h_n\in\Hin} cI_{Z}(h_n)\mu (h_n \mid h_t, \pi) + \sum_{h_n\in\Hin} R(h_n)I_{\neg Z}(h_n)\mu (h_n \mid h_t, \pi)\\
=& cI_Z(h_t) + I_{\neg Z}(h_t)\Big(V_{\neg Z}(R,\pi,h_t)\Big),
\end{align*}
since $Z$ is unriggable.
These two values are thus equivalent up to positive affine transformations with constants independent of $\pi$. So a policy that maximises one will maximise the other:
\begin{theorem}
An $R$-maximising agent that acted as if an unriggable $Z$ were impossible, would behave the same way as an agent with standard $\mu$ who maximises $R'=(I_Z)c + (1-I_Z)R$.
\end{theorem}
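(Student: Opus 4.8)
The plan is to make the phrase ``acted as if an unriggable $Z$ were impossible'' precise as Bayesian conditioning, and then to show that the disbelieving agent's value function and the value function of the standard agent with $R'$ differ only by an affine transformation whose coefficients are independent of the policy; since strictly positive affine transformations preserve $\argmax$, the two agents' optimal behaviours coincide. Essentially all of the computational content already appears in the display preceding the statement, so the proof is largely a matter of assembling it carefully.

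First I would fix the interpretation: an agent that disbelieves $Z$ replaces its conditional distribution over full histories $\mu(h_n \mid h_t, \pi)$ by the renormalised, $\neg Z$-weighted distribution $I_{\neg Z}(h_n)\mu(h_n \mid h_t, \pi)/N$, where $N = \sum_{h_n \in \Hin} I_{\neg Z}(h_n)\mu(h_n \mid h_t, \pi)$. The first key observation is that $N$ is exactly $I_{\neg Z}^\pi(h_t)$ (the expectation of the indicator on the partial history $h_t$), and since $Z$ — hence $\neg Z$ — is unriggable, $N = I_{\neg Z}(h_t)$, a quantity independent of $\pi$. I would restrict attention to histories $h_t$ with $I_{\neg Z}(h_t) > 0$; on histories where $I_{\neg Z}(h_t) = 0$ the agent's own beliefs already assign $\neg Z$ probability zero, so ``acting as if $Z$ were impossible'' is vacuous there (and the conditioned value is undefined), and nothing need be shown.

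Next I would write down and compare the two value functions. The disbelieving agent's value of $\pi$ at $h_t$ is $V_{\neg Z}(R,\pi,h_t)$, the expectation of $R$ under the conditioned distribution, which by the previous paragraph equals $\bigl(\sum_{h_n \in \Hin} R(h_n) I_{\neg Z}(h_n) \mu(h_n \mid h_t, \pi)\bigr)/I_{\neg Z}(h_t)$. For the standard agent with reward $R' = cI_Z + (1 - I_Z)R = cI_Z + I_{\neg Z}R$ (using the convention $I_{\neg Z} = 1 - I_Z$), linearity of expectation gives $V(R',\pi,h_t) = c\, I_Z^\pi(h_t) + \sum_{h_n \in \Hin} R(h_n) I_{\neg Z}(h_n) \mu(h_n \mid h_t, \pi)$. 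Invoking unriggability of $Z$ a second time, $I_Z^\pi(h_t) = I_Z(h_t)$, and the remaining sum is precisely $I_{\neg Z}(h_t)\, V_{\neg Z}(R,\pi,h_t)$; hence $V(R',\pi,h_t) = c\, I_Z(h_t) + I_{\neg Z}(h_t)\, V_{\neg Z}(R,\pi,h_t)$, with additive constant $c\, I_Z(h_t)$ and multiplicative constant $I_{\neg Z}(h_t) > 0$, both depending on $h_t$ but not on $\pi$.

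Finally I would conclude that, because $V(R',\cdot,h_t)$ is a strictly increasing affine function of $V_{\neg Z}(R,\cdot,h_t)$ with $\pi$-independent coefficients, the policies maximising one at $h_t$ are exactly those maximising the other; as this holds at every non-degenerate history, the two agents make the same action choices throughout, i.e.\ behave the same way. I do not expect a serious obstacle: the only points needing care are pinning down the conditioning interpretation, applying unriggability of $Z$ in both places it is used (to collapse the normaliser to $I_{\neg Z}(h_t)$ and to turn $c I_Z^\pi(h_t)$ into a policy-independent constant), and dispatching the degenerate case $I_{\neg Z}(h_t) = 0$ so that the affine transformation is genuinely positive.
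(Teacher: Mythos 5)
Your proposal is correct and follows essentially the same route as the paper's own argument (given in the text immediately preceding the theorem): interpret disbelief as conditioning on $\neg Z$, use unriggability to collapse the normaliser to $I_{\neg Z}(h_t)$ and to make $c\,I_Z^\pi(h_t)$ policy-independent, and conclude via a positive affine relation between $V_{\neg Z}(R,\pi,h_t)$ and $V(R',\pi,h_t)$. Your explicit handling of the degenerate case $I_{\neg Z}(h_t)=0$ is a small tidying-up that the paper omits, but it does not change the argument.
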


\subsection{Effective disbelief example}

Again, we will use the notation of \autoref{d:a:rew}.
The simplest example is one of an robot that `believes' that a human will always be checking IDs: $Z$ is `the human doesn't check ID's'.

\autoref{d:a:causal} has already defined variables that cover the human checking IDs; therefore it suffices to define $I_{Z} = 1-(I_{Y_0}+I_{Y_1})$ and $I_{\neg Z} = I_{Y_0}+I_{Y_1}$. Then:
\begin{restatable}{proposition}{disbeliefproposition}\label{disbelief:proposition}
If the robot aims to maximise the reward
\begin{align*}
I_{\neg Z} (R_a + R_d(X_w))
\end{align*}
then it will give wristbands, and drinks, iff it believes the attendee is mature.
\end{restatable}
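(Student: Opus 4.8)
The plan is to reduce the claim to an optimization in a modified world model via the effective-disbelief theorem proved just above, and then to solve that optimization directly on the drink-serving POMDP of \autoref{drink:POMDP}.

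First I would record that $I_{\neg Z}=I_{Y_0}+I_{Y_1}$ is unriggable: since $Y_0$ and $Y_1$ are unriggable (by the causal-counterfactual construction; see \autoref{causal:count:def}) and expectations are linear, $I_{\neg Z}^\pi(h)=I_{Y_0}^\pi(h)+I_{Y_1}^\pi(h)$ is independent of $\pi$. Then I would observe that the target reward can be written $I_{\neg Z}(R_a+R_d(X_w)) = 0\cdot I_Z + (1-I_Z)(R_a+R_d(X_w))$, which is exactly the form $R'=cI_Z+(1-I_Z)R$ of the effective-disbelief theorem with $c=0$ and $R=R_a+R_d(X_w)$. Hence that theorem gives that an agent maximising $I_{\neg Z}(R_a+R_d(X_w))$ in the true $\mu$ behaves identically to an agent that disbelieves $Z$ --- i.e.\ acts as though the human always ID-checks --- and maximises $R_a+R_d(X_w)$.

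Next I would analyse this ``always-checked'' model. With the human certain to check, the wristband component of $s_1$ is pinned to the attendee's true age ($I_w=1$ iff mature) regardless of $a_0$, and the observation $o_1\in\{w,w_p\}$ versus $o_1\in\{\neg w,\neg w_p\}$ reveals the age exactly, so the robot knows the attendee's maturity with certainty when it chooses $a_1$. Writing $R_d(X_w)=I_d(2I_w-1)$, the $R_d$ term is then maximised --- value $1$ when mature, $0$ when immature --- by giving a drink iff the (now known) attendee is mature, and this choice is decoupled from $a_0$. What remains, $R_a=-I_p-I_i$, depends only on $a_0$: choosing $i$ costs $I_i=1$ and, since a robot ID-check precludes the human check, also zeroes out $I_{\neg Z}$ and hence the whole reward, so it is dominated; between $g$ and $\neg g$, when $o_0=l_m$ the expected penalty is $1/3$ for $g$ and $2/3$ for $\neg g$ (and the reverse when $o_0=\neg l_m$), so $g$ is optimal iff $o_0=l_m$, i.e.\ iff the robot believes the attendee mature --- the same $R_a$-optimal behaviour identified in \autoref{problem:proposition}. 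Combining the two conclusions gives the claim.

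The hard part will be the $a_0=i$ case, and more broadly making the indicator bookkeeping $I_{\neg Z}=I_{Y_0}+I_{Y_1}$ interact correctly with the penalty and observation structure of this particular POMDP: one must check from the appendix definitions of $I_{Y_0},I_{Y_1}$ that $\neg Z$ really is unriggable and that conditioning on it genuinely behaves as ``the human always checks,'' including on histories where the robot chose $i$. An alternative route that sidesteps the disbelief theorem is to show directly that $(I_{Y_0}+I_{Y_1})R_d(X_w)=R_d(Y_0,Y_1)$ --- because $I_{Y_0}$ is supported on histories with $I_w=1$ and $I_{Y_1}$ on histories with $I_w=0$, so $(I_{Y_0}+I_{Y_1})I_d(2I_w-1)=I_dI_{Y_0}-I_dI_{Y_1}$ --- reducing the reward to $(I_{Y_0}+I_{Y_1})R_a+R_d(Y_0,Y_1)$ and then leaning on \autoref{causal:proposition}; but that still requires arguing the extra $(I_{Y_0}+I_{Y_1})$ factor multiplying $R_a$ does not change the maximising policy.
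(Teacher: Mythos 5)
Your main route --- reducing via the effective-disbelief theorem to an agent that conditions on ``the human always checks'' and then maximises $R_a+R_d(X_w)$ --- is genuinely different from the paper's proof. The paper instead does exactly what you sketch as your ``alternative'': it rewrites the reward term by term, using $I_{\neg Z}I_w=I_{Y_0}I_w$, $I_{\neg Z}I_{\neg w}=I_{Y_1}I_{\neg w}$, $I_{\neg Z}I_p=I_p$ and $I_{\neg Z}I_i=I_i/100$, obtaining $(-I_p-I_i/100)+I_dI_{Y_0}-I_dI_{Y_1}$, and then invokes \autoref{causal:proposition} plus a short check that weakening $-I_i$ to $-I_i/100$ still leaves $i$ suboptimal (certain loss $1/100$ versus expected loss $1/300$ from the penalty). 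Your disbelief-theorem reduction is arguably cleaner conceptually, and your analysis of the conditioned model for $a_0\in\{g,\neg g\}$ (age revealed by $o_1$, penalty probabilities $1/3$ vs $2/3$, drink choice decoupled from $a_0$) is correct and suffices for those branches.

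However, there is a concrete error in the one case you yourself flag as the hard part. You claim that ``a robot ID-check precludes the human check'' and hence that $a_0=i$ ``zeroes out $I_{\neg Z}$.'' In the paper's model this is false: \autoref{causal:count:def} explicitly sets $I_{Y_0}(o_0\, i\, w)=I_{Y_1}(o_0\, i\, \neg w)=1/100$, so $I_{\neg Z}=1/100$ after $i$ --- the human check occurs with probability $1/100$ regardless of $a_0$; it merely has no corrective or penalising effect after $i$. Worse, if your claim were true, the probability of $\neg Z$ would depend on the robot's action, $Z$ would be riggable, and the effective-disbelief theorem (which requires unriggability, and whose conditioning formula divides by $I_{\neg Z}(h_t)$) would not apply at all --- so the error is not harmless to your own framework. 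The correct handling in the conditioned model is that after $i$ the conditioning changes nothing (since $I_{\neg Z}$ is constant at $1/100$ over all continuations of $o_0 i$), so the agent pays the full $-1$ from $-I_i$ against a drink reward of at most $2/3$, giving value $-1/3$, which is dominated by the value $1/3$ of the correct wristband action. With that repair your argument goes through.
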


\section{Seamless transition}\label{seam:trans}

Suppose we wanted an agent to maximise $R$ for a period (including planning for the long term maximisation of $R$), and afterwards act to maximise $R'$ instead\footnote{
Note that $R'$ (or $R$) could be one of the rewards of \autoref{reward:manipulation}, so the methods can be combined.
}.
For the agent to seamlessly transition from a $R$-maximiser to a $R'$-maximiser, we need to use \emph{corrective rewards}.

Corrective rewards are extra non-standard rewards that the agent gets in order to ensure a smooth transition from one mode of behavior to another. They work by compensating the agent for all the $R$ rewards it would have received after time $t$ if not for the change, while nullifying any benefit the agent might receive from using its actions before time $t$ to optimize for future $R'$ rewards.

let $W(\pi,R,h)$ be an agent's estimation of the expected  future reward according to $R$, given policy $\pi$ and history $h$. This gives us the following definition:

\begin{definition}[Reward-policy transitioning agent]
Given $R$ and $R'$, assume that an agent with policy $\pi$ up until time $t$ changes to $\pi'$ after time $t$. A reward-policy transitioning agent is an agent with reward $R'$ which, just after $t$, gets the extra corrective reward
\begin{align*}
C(\pi,R,\pi',R',h_{t+1})=& W(\pi,R,h_{t+1}) - W(\pi',R',h_{t+1}).
\end{align*}
\end{definition}

\subsection{Seamless transition for reward maximisisers}

In the special case of the agents described in this paper, who maximise the expected reward they receive after $n$ time-steps, we set $W$ to be the true expected future reward $V$. So:
\begin{align*}
W(R, \pi, h_t) = V(R,\pi,h_t) = \sum_{h_n\in\Hin} \mu(h_n \mid h_t, \pi) R(h_n).
\end{align*}
The optimal value of this is:
\begin{align*}
V^*(R,h_t) = \max_\pi V(R,\pi,h_t).
\end{align*}

\begin{definition}\label{tra:ag}[Reward transitioning agent]
Let $R$ and $R'$ be reward functions and let $C$ be a corrective reward which is non-zero only on histories of length $t+1$, defined by,
\begin{align}\label{cor:eq}
C(R, R', h_{t+1}) = V^*(R,h_{t+1}) -V(R',\pi_A,h_{t+1}),
\end{align}
where $\pi_A$ is the agent's own policy.
A reward transitioning agent is one that acts to maximise the expected \emph{pseudo-reward} $R' + C$.
\end{definition}

\begin{restatable}{theorem}{valtheo}\label{val:theo}
Let $\pi_A$ be a policy for a reward transitioning agent as in \autoref{tra:ag}.
Then:
\begin{itemize}
\item For $m\leq t$, $\pi_A(h_t)$ is optimal for maximising expected total reward according to $R$.
\item For $m>t$, $\pi_A(h_t)$ is optimal for maximising expected total reward according to $R'$.
\end{itemize}
\end{restatable}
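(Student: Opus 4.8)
The plan is to exploit that the corrective reward $C$ is ``bygone'' --- a constant unaffected by future choices --- at every history of length at least $t+1$, so that the only real content of \autoref{tra:ag} is how $C$ reshapes behaviour before time $t$. First I would remove the apparent circularity in the definition of $C$ (it refers to $\pi_A$, which is in turn defined through $C$). Because $C$ is non-zero only on length-$(t+1)$ histories, at any history $h_m$ with $m\geq t+1$ the value $C$ contributes is already fixed by the length-$(t+1)$ prefix of $h_m$; maximising the pseudo-reward $R'+C$ on the subtree below such an $h_m$ is therefore identical to maximising $R'$. In particular any reward transitioning policy $\pi_A$ satisfies $V(R',\pi_A,h_{t+1})=V^*(R',h_{t+1})$ for all $h_{t+1}$, so the defining identity forces $C(h_{t+1})=V^*(R,h_{t+1})-V^*(R',h_{t+1})$, which no longer depends on $\pi_A$; existence of such a $\pi_A$ is then just ordinary backward induction. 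This already yields the second bullet: for $m>t$ the entire subtree policy below $h_m$, and hence the action $\pi_A(h_m)$, is optimal for $R'$.

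For the first bullet I would fix $m\leq t$ and a history $h_m$, and condition the pseudo-reward value on the length-$(t+1)$ prefix $h_{t+1}$ of the eventual full history, obtaining
\begin{align*}
V(R'+C,\pi,h_m)=\sum_{h_{t+1}}\mu(h_{t+1}\mid h_m,\pi)\big(C(h_{t+1})+V(R',\pi,h_{t+1})\big)
\end{align*}
for every policy $\pi$. Substituting $C(h_{t+1})=V^*(R,h_{t+1})-V^*(R',h_{t+1})$ and using $V(R',\pi,h_{t+1})\leq V^*(R',h_{t+1})$ --- with equality when $\pi=\pi_A$, by the previous paragraph --- gives $V(R'+C,\pi,h_m)\leq\sum_{h_{t+1}}\mu(h_{t+1}\mid h_m,\pi)\,V^*(R,h_{t+1})$, the bound being attained by $\pi_A$. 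A short auxiliary lemma --- from the decomposition $V(R,\pi,h_m)=\sum_{h_{t+1}}\mu(h_{t+1}\mid h_m,\pi)V(R,\pi,h_{t+1})$, the inequality $V(R,\pi,h_{t+1})\leq V^*(R,h_{t+1})$, and splicing an $R$-optimal tail onto an arbitrary policy after time $t$ --- shows that $\max_\pi\sum_{h_{t+1}}\mu(h_{t+1}\mid h_m,\pi)V^*(R,h_{t+1})=V^*(R,h_m)$, and moreover that a policy attains this maximum precisely when its action at $h_m$ is $R$-optimal. Since $\pi_A$ attains it, $\pi_A(h_m)$ is optimal for $R$.

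The main difficulty I expect is not a hard computation but disciplined bookkeeping: handling the self-referential $C$ cleanly, reading ``$\pi_A$ is optimal'' as subgame-optimality (which is what backward induction delivers and what makes the per-history statements meaningful), and being careful that the first bullet asserts only that the \emph{action} $\pi_A(h_m)$ is $R$-optimal, not that $\pi_A$'s entire continuation from $h_m$ maximises $R$ --- which it cannot, since after $t$ the agent follows $R'$. The corrective term is exactly what reconciles the two facts: it credits the agent the $R$-value it forgoes after $t$ and cancels the $R'$-value it will actually collect, leaving its pre-$t$ incentives identical to those of an $R$-maximiser.
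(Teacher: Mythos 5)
Your proof is correct and follows essentially the same route as the paper's: condition the pseudo-reward on the length-$(t+1)$ prefix, let the $-V(R',\pi_A,h_{t+1})$ part of $C$ cancel the actual $R'$ value collected after time $t$, and observe that what remains is the expectation of $V^*(R,h_{t+1})$, which is maximised exactly by $R$-optimal play before time $t+1$. The only differences are presentational: you first resolve the self-reference in $C$ (pinning it to $V^*(R,h_{t+1})-V^*(R',h_{t+1})$ via the second bullet, so the first-bullet cancellation becomes an inequality attained by $\pi_A$), and you spell out the final backward-induction step that the paper dismisses as ``obviously maximised.''
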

Since $R'+C$ is pseudo-reward, it isn't stable: an agent that deleted $C$ wouldn't gain or lose any expected reward.
For a small $\epsilon > 0$, if $C(R, R', h_{t+1})$ were $V^*(R,h_{t+1}) - (1-\epsilon)V(R',\pi_A,h_{t+1})$, then $C$ would be more stable and the agent likely still following \autoref{val:theo}.

\subsubsection{Reward transition example}
Recall from \autoref{problem:proposition} that the optimal policy to maximise $R_a + R_d(X)$ was policy $\pi$: to give wristbands to everyone, and then drinks to everyone who still had a wristband. If we transition after $t=1$, then the corrective reward for $\pi$ would be 
\begin{align*}
C(R_a, R_d(X), h_1) = V^* (R_a, h_1) - V(R_d(X), \pi, h_1)
\end{align*}

The $V^* (R_a, h_1) - V(R_d(X), \pi, h_1)$ term above effectively cancels out all the reward $\pi$ stands to gain from giving out extra wristbands. Thus the robot will prefer a policy of only giving wristbands to attendees who look mature over a policy of giving wristbands to everyone (and will still serve drinks to whoever is wearing them afterwards).

See \autoref{q:learn} and \citet{Orseau16} for examples of seamless transition where the reward stays the same but the policy changes.

\section{Conclusion}
This paper detailed the three `indifference'-style goals, and five methods that can be use to attain them. All of these can used to make an agent with a potentially dangerous reward $R$, into a safer a version of that agent, without needing to understand the intricacies of this reward function.

It's hoped that further research could extend beyond indifference to the more general property of corrigibility \citep{soares2015corrigibility} -- where the agent actively assists humans when they are guiding the agent towards better rewards \citep{DBLP:journals/corr/MilliHDR17}, \citep{DBLP:journals/corr/EvansSG15} rather than just being indifferent at key moments.


\bibliography{../ref}

\appendix
\newpage

\section{Indicator variables and events}\label{indicator:appendix}

Suppose we wanted to track whether the agent made observation $o_i = o$; call this event $X$.
Since this happens on turn $i$, $X$ is not a Makov definition: it only applies to turn $i$.

But what we can say, is that on some complete histories $h_n$, $X$ happened (the ones with $o_i = o$), and on some it didn't (the ones with $o_i\neq o$).
One way of tracking this is to look at the indicator variable $I_X$, which maps the first set of histories to $1$, and the second set to $0$.

Then, given a policy $\pi$ and a history $h$, the probability that $X$ will happen is the expectation of $I_X$:
\begin{align*}
\expect_\mu^\pi \left[I_X(h_n) \mid h \right] = \sum_{h_n\in\Hin} \mu(h_n \mid h,\pi) I_X(h_n).
\end{align*}

Similarly, we can define indicator variables for events that include a variety of actions by the agent, observations, and so on, such as $X$ meaning that $a_i = a$, $o_j = o$, and $a_k = a'$.

So far, these indicator variables behave exactly as they should be: $I_X(h_n)$ takes values either $0$ or $1$.

But it's possible that the history $h_n$ would not be enough to fully determine the event $X$.
For instance, in a world model with non-trivial $O$, we could define an event $X$ by whether $s_i = s$.
Since the state is not directly observable, there are many world models were $h_n$ would give a probability of whether $s_i = s$, but not a full determination of it.

In that case, we could define the expectation of $I_X$ on a complete history $h_n$ as $\mu(s_i = s\mid h_n)$, and denote this by $\expect_\mu \left[I_X(h_n)\right]$.

Now, we could designate $\expect_\mu \left[I_X(h_n)\right]$ by $I_X(h_n)$.
Since expectations chain:
\begin{align*}
\expect_\mu^\pi \left[I_X(h_n) \mid h \right] =& \expect_\mu^\pi \left[ \expect_\mu \left[I_X(h_n)\right] \mid h \right] \\
=& \expect_\mu^\pi \left[ I_X(h_n) \mid h \right],
\end{align*}
we can make use of these $I_X(h_n)\in [0,1]$ exactly as we did above when they were taking values in $\{0,1\}$.

There is another reason to use this notation.
Suppose we assume that there is an unobserved fair coin toss on turn $t$, and $X$ is the event that it came up heads.
In order to represent that properly, we would have to extend the world model $\mu$ to $\mu'$, which included the coin toss, and then denote this by $\expect_{\mu'} \left[I_X(h_n)\right]$, which is $1/2$ in this case.

But it would be much simpler, and equivalent in calculations, to have simply defined $I_X(h_n)=1/2$ for all $h_n\in\Hin$.

Thus we define an indicator variable $I_X$ as a map from $\Hin$ to $[0,1]$.
This means that when we want to introduce an event to the world model, it suffices to specify its $I_X$; there is no need to first extend $\mu$ in order to do, in a way that won't be relevant to any of the calculations within the model.
This also has the advantage that the $I_X$ are rewards: thus including them in definitions of rewards makes good sense.

Once we envisage these indicator variables as possibly taking values between $0$ and $1$ -- in situations where we lack full information about $X$ -- it makes sense to designate $\expect_\mu \left[I_X(h_n) \mid h \right]$, for unriggable $I_X$, as $I_X(h)$.
After all, all that happens for $I_X$ on the shorter $h$ is that we may have even less information.

Finally, the notation $I_X^\pol(h)$ was used for riggable $X$; there is no principled justification for that, it merely serves to keep the notation consistent.

\subsection{Indicator variables are many-to-one maps from events}

Notice that events $X$ define $I_X$, but that $I_X$ may not be sufficient to define $X$.
For example, in the world model of \autoref{drink:POMDP}, the event $X$: $a_1 = g$ has the same indicator variable as $Y$: $o_2=d$: the only way of getting a drink, is if the robot gives it out.

This only happens when $X$ and $Y$ are indistinguishable within $\mu$.
Thus $I_X$ is actually the indicator variable for $X$ and all events $\mu$-indistinguishable from $X$.
For this reason, $I_X$ will often be considered the fundamental object of interest, rather than $X$.

If needed, a specific event $X$ can always be constructed from $I_X$.
If nothing more natural can be defined, we can always define $X$ to be the outcome of heads on a random weighted coin flip with probability $I_X(h_n)$ of heads after any history $h_n\in\Hin$.

\subsection{Definitions of the causal counterfactual events}\label{causal:count:def}

This section will define the $Y_0$ and $Y_1$ of \autoref{d:a:causal}.
Recall that we will be defining $Y_0$ as `the attendee has a valid ID, and has been ID-checked by the human', and $Y_1$ as `the attendee has no valid ID, and has been ID-checked by the human'.

The last action and observation -- $a_1$ and $o_2$ -- are irrelevant to the $Y_i$, so we will look at their definitions on histories $h_1\in \Hi_1$.

Only on those histories where $o_1 = w_p$ or $o_1 = \neg w_p$ will the robot \emph{know} that the attendee was checked by a human, since there a penalty has been imposed; for those histories, $I_{Y_0}(h_2)=1$, $I_{Y_1}(h_2)=0$ in the first case and $I_{Y_0}(h_2)=0$, $I_{Y_1}(h_2)=1$ in the second.

If the robot uses action $a_0=i$, then the wristband will correspond to the observation $o_2$.
So the probability of the relevant $Y_i$ is merely the probability that a human checked it -- $1/100$ -- since there is no possibility of correction and penalty, since $i$ ensures the robot got the wristband right.
So whatever $o_0$ is,
\begin{align*}
I_{Y_0}(o_0 i w) &= \frac{1}{100}\\
I_{Y_0}(o_0 i \neg w) &= 0 \\
I_{Y_1}(o_0 i w) &= 0 \\
I_{Y_1}(o_0 i \neg w) &= \frac{1}{100}.
\end{align*}

So what remains to consider, are the four histories where $a_0 = g$ or $a_0 = \neg g$, and no penalties are assessed.
Consider $l_m g w$.
The underlying sequence of states is either $m g w$ or $\neg m g w$.
By Bayes, suppressing the implicit conditioning on $a_0=g$:
\begin{align*}
\mu(m g w \mid l_m g w)  =&  \frac{\mu(l_m g w \mid m g w)\mu(m g w)}{\mu(l_m g w)} \\
=& \frac{\mu(l_m g w \mid m g w)\mu(m g w)}{\mu(l_m g w \mid m g w)\mu(m g w) + \mu(l_m g w \mid \neg m g w)\mu(\neg m g w)}\\
=& \frac{2/3 \cdot 1/2}{2/3 \cdot 1/2 + 1/3 \cdot 1/2 \cdot 99/100} \\
=& \frac{200}{299}.
\end{align*}
Similarly, $\mu(\neg m \neg g \neg w \mid \neg l_m \neg g \neg w)=\frac{200}{299}$, and $\mu(\neg m g w \mid l_m g w) = \mu(m \neg g \neg w \mid l_m \neg g \neg w)=\frac{100}{299}$.

If we sensibly see $I_{Y_0}$ as being $0$ if $s_0=\neg m$, and note that the probabilities of human ID check on $m g w$ and $m\neg g \neg w$ are $1/100$ and $0$, respectively, then:
\begin{align*}
I_{Y_0} (l_m g w) = \frac{1}{100}\frac{200}{299} + 0 \frac{100}{299} =& \frac{2}{299}\\
I_{Y_0} (\neg l_m g w) = \frac{1}{100}\frac{100}{299} + 0 \frac{200}{299} =& \frac{1}{299} \\
I_{Y_0} (l_m \neg g \neg w) = I_{Y_0} (\neg l_m \neg g \neg w) =& 0.
\end{align*}
And similarly $I_{Y_1} (l_m g w) = \frac{1}{299}$, $I_{Y_1} (\neg l_m \neg g \neg w) = \frac{2}{299}$, and $I_{Y_1} (l_m g w) = I_{Y_1} (\neg l_m g w) = 0$.

We now need to show that:
\begin{lemma}\label{lemma:causal}
The $I_{Y_0}$ and $I_{Y_1}$ defined above satisfy the conditions of \autoref{cause:count} with respect to $\neg X_w$, the event of having a wristband, and $R_0=I_d$ and $R_1=-I_d$.
\end{lemma}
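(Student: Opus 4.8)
The plan is to run down the requirements \autoref{cause:count} places on the data $X=\neg X_w$, $R_0=I_d$, $R_1=-I_d$. The identity $R(Y_0,Y_1)=I_{Y_0}R_0+I_{Y_1}R_1$ is immediate from the reward in \autoref{causal:proposition}, since $R_d(Y_0,Y_1)=I_dI_{Y_0}-I_dI_{Y_1}$. For the rest I would lean on two facts visible in the definitions given just above the lemma: $I_{Y_0}$ and $I_{Y_1}$ depend on a full history $h_2$ only through its initial segment $o_0a_0o_1$; and $I_{Y_0}(o_0a_0o_1)$ is exactly the conditional probability, given $o_0a_0o_1$, that the attendee is of age ($s_0=m$) and the human runs an ID check, with $I_{Y_1}$ the same but with $s_0=\neg m$.

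I would first handle unriggability (part of \autoref{cause:count}) together with the non-vanishing of $I_{Y_0}+I_{Y_1}$. Since $I_{Y_j}$ factors through $o_0a_0o_1$, on any history of length at least $1$ the expectation $I_{Y_j}^\pi$ is the constant $I_{Y_j}(o_0a_0o_1)$ and so is policy-free. At length $0$, $I_{Y_0}^\pi(o_0)=\sum_{a_0}\pi(a_0\mid o_0)\sum_{o_1}\mu(o_1\mid o_0,a_0)I_{Y_0}(o_0a_0o_1)$, and by the tower rule the inner sum is the probability, given $o_0$ and $a_0$, that the attendee is of age and the human checks; since age (drawn by $T_0$) and the human's check (an exogenous $1/100$ event) are independent of each other and of $a_0$ --- the robot's action only governs whether a penalty is attached, never the age or the occurrence of a check --- that inner sum equals $\tfrac1{100}\mu(s_0=m\mid o_0)$ for every $a_0$, so $I_{Y_0}^\pi(o_0)=\tfrac1{100}\mu(s_0=m\mid o_0)$ independently of $\pi$, and symmetrically $I_{Y_1}^\pi(o_0)=\tfrac1{100}\mu(s_0=\neg m\mid o_0)$. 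This also gives non-vanishing: on every history $I_{Y_0}+I_{Y_1}$ is the conditional probability that a human check occurs, which is at least $\tfrac1{100}>0$ because a check stays possible after every first action, including $i$.

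Next I would verify the two inequalities, with $I_X=I_{\neg X_w}$ the probability of ending without a wristband and $I_{\neg X}=I_{X_w}$ that of ending with one. On a history of length at least $1$ the wristband status is fixed by $o_1$: if $o_1\in\{w,w_p\}$ then $\min_\pi I_{X_w}^\pi=1$, so $I_{Y_0}\leq 1$ is automatic, while $I_{Y_1}=0$ because an under-age attendee who is checked by the human loses any wristband (the state becomes $\neg w_p$), so $Y_1$ cannot co-occur with $o_1\in\{w,w_p\}$; the case $o_1\in\{\neg w,\neg w_p\}$ is symmetric ($I_{Y_0}=0$ since $Y_0$ forces a wristband, and $\min_\pi I_{\neg X_w}^\pi=1\geq I_{Y_1}$). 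On a length-$0$ history $o_0$, the wristband-minimising action is $a_0=\neg g$, under which a wristband appears exactly when the attendee is of age and the human corrects the wrongful denial, so $\min_\pi I_{X_w}^\pi(o_0)=\tfrac1{100}\mu(s_0=m\mid o_0)$, which by the previous paragraph is $I_{Y_0}(o_0)$; hence $I_{Y_0}(o_0)\leq\min_\pi I_{\neg X}^\pi(o_0)$ holds (with equality), and symmetrically $I_{Y_1}(o_0)=\tfrac1{100}\mu(s_0=\neg m\mid o_0)=\min_\pi I_X^\pi(o_0)$.

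The step I expect to be the real obstacle is the independence of $I_{Y_0}$ from $R_0=I_d$, and of $I_{Y_1}$ from $R_1=-I_d$. The intended reason is structural: $I_{Y_0}$ is a function of the attendee's age and the human's $1/100$ check coin, whereas $R_0=I_d$ is a function of whether a drink is served --- which, once $a_1$ is fixed, is governed by the independent fair coin resolving $a_1=i$ and is otherwise constant in $a_1$ alone; age and the two coins are mutually independent and independent of the action sequence, so conditional on the agent's actions $I_{Y_0}$ and $I_d$ are independent, and likewise $I_{Y_1}$ and $-I_d$. The delicacy is exactly in reading ``independent as random variables'' this way: $I_{Y_j}$ and $I_d$ both see $o_1$, so a policy that picked $a_1$ as a function of $o_1$ could manufacture a correlation between them --- but that correlation would run through the agent's own action rather than through nature, which is all the causal-counterfactual construction and \autoref{causal:theorem} actually use; making this precise (for instance via the explicit coin-flip extension of $\mu$ discussed in \autoref{indicator:appendix}) is where the care lies.
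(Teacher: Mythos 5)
Your proof is correct in substance but reaches the conclusions by a different route than the paper. For unriggability the paper does not compute expectations at all: it extends the world model to a $\mu'$ containing a hidden bit $id$ (the $1/100$ human-check event), identifies $Y_0$ and $Y_1$ with the initial states $(m,id)$ and $(\neg m, id)$, and reads off $I_{Y_j}^\pi(h)=\mu'(s_0=(\cdot,id)\mid h)$, which is policy-free by inspection; your tower-rule calculation of $I_{Y_0}^\pi(o_0)=\tfrac1{100}\mu(s_0=m\mid o_0)$ proves the same thing more concretely but needs the exogeneity of age and the check to be argued by hand, which is exactly what the hidden-state reformulation packages up. For the inequalities the paper only checks $I_{Y_0}(h_2)\le I_w(h_2)$ on \emph{complete} histories and lets unriggability lift this to $I_{Y_0}(h)\le\min_\pi I_w^\pi(h)$ for all $h$ (since $I_{Y_0}(h)=\expect^\pi[I_{Y_0}]\le\expect^\pi[I_w]=I_w^\pi(h)$ for every $\pi$); your level-by-level verification, including the explicit $\min_\pi I_{X_w}^\pi(o_0)=\tfrac1{100}\mu(s_0=m\mid o_0)$ with equality, is more work but buys the observation that the bound is tight at the root. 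The non-vanishing arguments coincide. On independence, the paper resolves it exactly the way you suspect is ``intended'': it declares $I_{Y_0}$ and $I_d$ independent because $I_{Y_0}$ does not depend on $(a_1,o_2)$ while $I_d$ does not depend on $(o_0,a_0,o_1)$ --- a functional-disjointness reading, with no discussion of the policy-induced correlation you point out. Your caveat is legitimate (a policy with $a_1$ a function of $o_1$ really does correlate the two under $\mu(\cdot\mid h,\pi)$, which is what the product formula in the proof of \autoref{causal:theorem} nominally needs), but this is a looseness in the paper's \autoref{cause:count} itself rather than a gap specific to your argument; flagging it rather than resolving it puts you at the same level of rigour as the published proof.
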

\begin{proof}
We can expand the world model to $\mu'$, which includes an extra hidden variable $id$, which activates with probability $1/100$ and causes a human ID check.
The initial states are thus $(m, id)$, $(\neg m, id)$, $(m, \neg id)$, and $(\neg m, \neg id)$.
This world model is equivalent with the original one (see \citet{armstrong2018counterfactual}), has deterministic $T$ for the first action (since the human ID check is encoded in the initial state), and $Y_0$ and $Y_1$ are equivalent with $\mu'$ starting in states $(m,id)$ and $(\neg m, id)$, respectively.

Then $I_{Y_0}^\pi(h) = \mu'(s_0 = (m,id) \mid h)$, which is clearly unriggable since $\pi$ doesn't appear on the right hand side.

On histories $h_2$ with $o_1 \in \{\neg w, \neg w_p\}$, $I_{Y_0}(h_2)=0=I_{w}(h_2)$.
On all other histories $h_2$, $I_{Y_0}(h_2) \leq 1 = I_w(h_2)$.
This demonstrates that $I_{Y_0}(h) \leq \min_\pi \min_\pi (I_{w}^\pi(h))$.

The independence of $I_{Y_0}$ and $I_d$ is easy to see, as $I_{Y_0}$ is independent of $a_1$ and $o_2$, while $I_d$ is independent of $o_0$, $a_0$, and $o_1$.

The argument for $I_{Y_1}$ is the same.

Then it suffices to check on every history $h$ that $I_{Y_0}(h) + I_{Y_0}(h) > 0$ -- which is exactly what we would expect, as $I_{Y_0}(h)=I_{Y_1}(h)=0$ implies that the robot knows there has been no human checking, something that it cannot know in the world model as constructed.
\end{proof}

\section{Proofs}\label{appendix:proof}

This section will prove the theorems and propositions of the main paper.

\indicatortheorem*
\begin{proof}
Given $\mu$ and $\pi$, $h$ generates a probability distribution over $\Hin$.
This defines the expectation of $I_X$:
\begin{align*}
I_X^\pi(h) = \expect^\pi_\mu \left[ I_X(h_n) \mid h \right] = \sum_{h_n\in\Hin} I_X(h_n)\mu(h_n \mid h,\pi).
\end{align*}
\end{proof}

\problemproposition*

\begin{proof}
By \autoref{rew:a}:
\begin{align*}
R_a = I_p (-1) + I_{i}(-1) = -I_p -I_i.
\end{align*}
So the reward $R_a$ is non-zero only when the robot asks for ID or gets a penalty, and is negative in both cases.

If the robot observes $l_m$, then the probability of $s_0=m$ is $2/3$, while the probability of $s_0 = \neg m$ is $1/3$.
In that case, choosing $i$ will result in a $-1$ reward (via $-I_i$), choosing $g$ will result in an expected $-1/300$ reward (via $-I_p$, the probability of $\neg m$ -- $1/3$ -- times the probability of a human checking and thus assigning a penalty -- $1/100$), and choosing $\neg g$ will result in an expected $-2/300$ reward.
If the robot observes $\neg l_m$, the expected rewards of $g$ and $\neg g$ are inverted.

Thus the optimal policy for maximising $R_a$ is $a_0 = g$ if $o_0=l_m$ and $a_0 = \neg g$ if $o_0= \neg l_m$: giving the wristband iff the attendee seems mature.

By \autoref{rew:d}:
\begin{align*}
R_d(X_w) = I_dI_w(1) + I_{d}I_{\neg w}(-1) = I_dI_w + I_d(1-I_w)(-1) = I_d(2I_w -1).
\end{align*}
So the reward $R_d(X_w)$ is positive if the robot gives a drink ($d$ via $g$ or $i$) to someone with a wristband ($w$ or $w_p$).
It is negative if they do give a drink to someone without a wristband ($\neg w$ or $\neg w_p$).
Therefore they will choose $g$ in the first case and $\neg g$ in the second; the action $i$ has only $1/2$ chance of reaching the correct $d$ or $\neg d$, so it won't choose that.

Now consider $R_a + R_d(X_w)$. Once the robot has reached state $o_1$, the rewards from $R_a$ are in the past, so it will be a pure $R_d(X_w)$ maximiser, hence a maximiser will have the same policy as in the paragraph above.
So, if the attendee has a wristband ($w$ or $w_p$), the robot stands to win $1$ reward with perfect policy; if the attendee doesn't have a wristband, it stands to gain $0$.

Now consider the robot at the time of its very first action.
Again, the loss from $i$ is too high to consider that.
If it does $o_1 = g$ iff $o_0= l_m$, then it has expected reward $-\frac{1}{300}$ from $R_a$, and the attendee will have a wristband half the time, so its expected total reward is:
\begin{align*}
-\frac{1}{300} + \frac{1}{2} (0+1) + \frac{1}{2} (0+0) = \frac{149}{300}.
\end{align*}

If it always gives a wristband, the agent will have a penalty (end up in $\neg w_p$) with probability $\frac{1}{300}$ if $o_0 = l_m$, and $\frac{2}{300}$ if $o_0=\neg l_m$, for a total probability of $\frac{1}{200}$.
In that case only, it will get a negative reward from $-I_p$, and get no reward from $R_d(X_w)$.
Otherwise, it gets no reward from $R_a$, and reward $1$ from $R_d(X_w)$.
So the total expected reward is
\begin{align*}
\frac{1}{200} (-1+0) + \frac{199}{200} (0+1) = \frac{99}{100} > \frac{149}{300}.
\end{align*}
It easy to see that all other policies are inferior, so the robot will give everyone a wristband, and, if they still have it at $o_1$, a drink.

\end{proof}

\policytheorem*
\begin{proof}
\autoref{IXs:def} implies $0 \leq I_{X}(\pi_0, s_0, \mu) \leq 1$, and \autoref{policy:count} then implies the same thing for $I_Y$.
Since $I_Y$ is defined on $\Hin$, it defines an event $Y$.

To see that $I_Y$ is well defined on any $h$, independent of future actions, it suffices to note that both $I_{X}(\pi_0, s, \mu)$ and $\mu(s_0 = s \mid h)$ have no dependence on $\pi$.
\end{proof}

\policyproposition*
\begin{proof}

Let us focus first on $R_d(I_Y)$.
Since $I_Y(h)=\mu(s_0 = m \mid h)$ is unriggable, $R_d(I_Y)$ depends solely on the probability of $s_0 = m$ (the probability that the attendee is mature) and $a_1$, the action that gives a drink or not.
The optimal policy for $a_1$, after a history $h_1$, is obviously to do $g$ iff $\mu(s_0 = m \mid h_1)>1/2$, and $\neg g$ otherwise.

The action $a_0$ cannot affect the expectation of $I_Y$, but it can affect what $a_1$ the robot will subsequently define to be optimal.
Then say that the action $a_0=a$ is non-informative if the optimal action $a_1$ is already known after $o_0 a_0$, without having to wait for the observation $o_1$.

Then, for maximising $R_d(h)$, it is clear that \emph{non-informative actions cannot be better than informative ones}.
This is because the expected reward of a non-informative action is equal to taking an informative action and closing your eyes to $o_1$.
Since extra information always has a non-negative value to a Bayesian agent, this cannot improve the situation.

Now assume $o_0=l_m$; then $\mu(s_0 =m \mid l_m)=2/3$ and $\mu(s_0=\neg m \mid l_m)=1/3$.
If the robot chooses $g$ or $\neg g$, the probability of seeing a penalty -- $w_p$ or $\neg w_p$ -- is at most $1/100$.
Therefore if the robot chooses $g$ or $\neg g$ and doesn't see a penalty at $o_1$, the probability of $\mu(s_0 =m \mid l_m a_0 o_1)$ does not change enough to put it below $0$.
Therefore, in that case, the optimal action $a_1$ is $g$, giving a drink.

If $h_1=l_m \neg g w_p$, the agent knows that $s_0=m$, so the agent will still choose $a_1=g$.
However, if $h_1=l_m g \neg w_p$, then the agent knows that $s_0=\neg m$, so the agent will choose $a_1=\neg g$ instead.

These cover all the options for $g$ and $\neg g$, therefore $\neg g$ is non-informative given $o_0 = l_m$, while $g$ is informative.
It's trivial to see that $i$ is also (very) informative.

Now let's add $R_a$ into the reward.
The probability of a penalty given $g$ and $o_0=l_m$ is $1/300$ (the product of $1/3$, the chance of $s_0=\neg m$, times $1/100$, the probability of the human checking ID).
The probability of a penalty given $\neg g$ is $2/300$.

Therefore, because of $I_p$ in $R_a$, $g$ has a higher expected reward on $R_a$ than $\neg g$, and is more informative on $R_d(Y)$: it therefore has a higher expected reward on $R_a + R_d(Y)$.

Though $i$ is informative, it is easy to check that the $-1$ reward coming from $-I_i$ overwhelms this effect: it has an expected reward of $2/3(-1+1)+1/3(-1+0)=-1/3$.
In contrast, the non-optimal policy of $g$ (wristband) blindly followed by $\neg g$ (no drink) has an expected reward of $-1/100 + 0$ (the probability of a penalty given $\neg g$); the optimal policy's expected reward would be even higher.

Now, if we had $o_0=\neg l_m$, then the same argument would show that $a_0=\neg g$ has higher expected value than $a_0=g$, and higher expected value than $-2/3$, the expected reward given $a_0=i$.

So the robot will give a wristband iff $o_0=l_m$, ie iff it believes the attendee is more likely to be mature at that point, then give a drink iff $\mu(s_0=m\mid h_1)>1$, ie iff it believes the attendee is more likely to be mature at that point.

\end{proof}

\causaltheorem*
\begin{proof}
If $\min_\pi I_X^\pi(h) = 1$, then, by definition, $I_{Y_0}(h)=0$.
Since $I_{Y_0}$ maps into $[0,1]$, this means that for any $h_n \geq h$ with $\mu(h_n \mid h) \neq 0$, $I_{Y_0}(h_n)=0$.
Hence $I_{Y_0}(h')=0$ for all $h' \geq h$ with $\mu(h' \mid h) \neq 0$.

Thus $I_{Y_0}=0$ for all possible future histories from $h$.
So $R(Y_0,Y_1)$ becomes $I_{Y_1}R_1 + 0$, with $I_{Y_1} > 0$ from now on.
Since $I_{Y_1}$ and $R_1$ are independent, the expected value of $I_{Y_1}R_1$ under a policy $\pi$ is:
\begin{align*}
\sum_{h_n\in\Hin} \mu(h_n \mid h, \pi) I_{Y_1}R_1(h_n) =& \left(\sum_{h_n\in\Hin} \mu(h_n \mid h, \pi) R_1(h_n) \right)\cdot\left( \sum_{h_n\in\Hin} \mu(h_n \mid h,\pi) I_{Y_1}(h_n)\right)\\
=& \left(\sum_{h_n\in\Hin} \mu(h_n \mid h, \pi) R_1(h_n) \right)\cdot I_{Y_1}(h),
\end{align*}
since $I_{Y_1}$ is unriggable.
Thus maximising $R(Y_0,Y_1)$ in this situation is equivalent to maximising $R_1$.

The proof for $\min_\pi I_{\neg X}^\pi(h)=1$ and $R_0$ is the same.
\end{proof}

\causalproposition*
\begin{proof}

\autoref{lemma:causal} shows that $Y_0$ and $Y_1$ define causal counterfactuals for $\neg X_w$, $I_d$ and $-I_d$, as defined in \autoref{cause:count}.

Thus $I_{Y_0}$ and $I_d$ are independent.
Then, when choosing action $a_0$, the expectation of $I_{Y_0}I_d(h_2)$ is the product of the expectations of $I_{Y_0}$ (which is independent of $a_0$, since $I_{Y_0}$ is unriggable) and the expectation of $I_d$ (which is independent of $a_0$, as it depends only on $a_1$).
The same goes for $-I_{Y_1}I_d$.

Thus with $a_0$, it will maximise $R_a$ independently of the future, by giving the wristband if $o_0 = l_m$, and not giving it if $o_0=l_{\neg m}$: thus giving drinks iff it thinks the attendee is mature.

After observation $o_1$, then one of $I_w=I_{X_w}$ or $I_{\neg w}=I_{X_{\neg w}}$ is $1$, which triggers the condition of \autoref{causal:theorem}: so the robot will behave the same way after $o_1$ as if it were maximising $R_d(X_w)$, namely give the drinks iff the attendee has a wristband.

But when will the attendee have a wristband?
This will be either if the robot has given them one and it hasn't been removed (thus the robot thought the attendee was mas mature, and still thinks it), or if the robot didn't give them one and now they have one (thus the robot thought the attendee was immature, but now \emph{knows} they are mature).
Hence it will give them a drink iff it thinks they are mature.

\end{proof}

\disbeliefproposition*
\begin{proof}
By definition, $I_{\neg Z}I_w$ (a human checked ID, and the attendee has a wristband -- hence the attendee is mature) is equal to $I_{Y_0}I_w$ (a human checked ID, the attendee is mature, and the attendee has a wristband).
By the same argument, $I_{\neg Z}I_{\neg w} = I_{Y_1}I_{\neg w}$

Similarly, $I_{\neg Z}I_p$ (a human checked ID, a penalty was given) is equal to $I_p$ (a penalty was given -- only possible if a human checked ID).

If the robot choose action $i$, then the probability that the human checks ID is $1/100$.
Thus $- I_{\neg Z} I_i = -I_i /100$.
Thus the whole reward can be re-written as
\begin{align*}
(-I_p -I_i/100) + I_dI_{Y_0} - I_d I_{Y_1}.
\end{align*}
Then the proof of \autoref{causal:proposition} will apply to this reward as well.

Note that the difference between $-I_i$ and $-I_i/100$ is not enough to make the robot choose $i$, since $i$ is a certainty of a loss of $1/100$ via $-I_i/100$, while choosing $g$ after $l_m$ or $\neg g$ after $\neg l_m$ only has a probability of $1/3 \cdot 1/100$ (probability of wrong maturity and human checking ID) of a loss of $1$ via $-I_p$. Thus the expected loss from $i$ is $1/100$, while the expected loss from giving the wristband iff the human looks mature is $1/300$.
\end{proof}

\valtheo*
\begin{proof}

For $m>t$, the expected value of $R'+C$ is the expected value of $R'$, since $C$ has already been allocated.
So for $m>t$, $\pi_A(h_m)$ is optimal for maximising $R'$.

For $m \leq t$, define
\begin{align*}
V(R,\pi,h_m, t+1) &= \sum_{h_{t+1}\in\Hi_{t+1}} \mu(h_{t+1} \mid  h_m, \pi) V(R,\pi,h_{t+1}),\\
V^*(R,\pi,h_m, t+1) &= \sum_{h_{t+1}\in\Hi_{t+1}} \mu(h_{t+1} \mid h_m, \pi) V^*(R,h_{t+1}),
\end{align*}
as the expected values of the future $V(R,\pi,h_{t+1})$ and future $V^*(R,h_{t+1})$, given the current $h_m$ and policy $\pi$.

Since $C(R, R', h_{t+1}) = V^*(R,h_{t+1}) -V(R',\pi_A,h_{t+1})$ by \autoref{cor:eq}, the expected value of $R' + C$ before $t+1$, on history $h_m$ and given $\pi_A$, is:
\begin{align}\label{expect:C}
V(R'+ C, \pi_A, h_m) = V(R', \pi_A, h_m) + V^*(R,\pi_A,h_m, t+1) - V(R',\pi_A,h_m, t+1).
\end{align}
For any $h_n\in \Hin$, $\mu(h_n \mid h_m)= \sum_{h_{t+1}\in\Hi_{t+1}} \mu(h_n \mid h_{t+1})\mu(h_{t+1} \mid h_m)$; thus $V(R',\pi_A,h_m, t+1)= V(R',\pi_A,h_m)$, and \autoref{expect:C} reduces to
\begin{align*}
V^*(R,\pi_A,h_m, t+1).
\end{align*}
This is the value of $R$ if the robot follows policy $\pi_A$ up until time $t+1$, and the optimal policy for $R$ after that.
This is obviously maximised by $\pi_A$ being the optimal policy for $R$ up until time $t+1$.
\end{proof}

\section{Policy transition example: learning Q-values}\label{q:learn}

Corrective rewards (see \autoref{seam:trans}) are also useful when we want our agents to learn the right action-values in cases where the reward stays the same, $R=R'$, but a change in policy occurs at time $t$. \citet{Orseau16} applies corrective rewards to Q-learning and Sarsa \citep{sutton1998reinforcement}.

At time $t$, the agent is in state $s_t$, takes action $a_t$ via policy $\pi$, gets reward $R(s,a)$, and ends up in state $s_{t+1}$.
In that state, it follows $\pi'$, to take action $a'_{t+1}$, while $\pi$ would have taken action $a_{t+1}$.

Both Q-learning and Sarsa require as parameters a learning rate $\alpha_t \geq 0$ and a discount $\gamma \leq 1$ to learn action-values, which are updated according to:
\begin{align*}
Q(s_t,a_t) \leftarrow W(R,\pi,h_{t+1}).
\end{align*}
For Q-learning, this $W$ is
\begin{align*}
W(R,\pi,h_{t+1}) =& (1-\alpha_t) Q(s_t,a_t) +\alpha_t \big(R(s_t,a_t)+\gamma \max_a Q(s_{t+1},a)\big),
\end{align*}
While for Sarsa, $W$ is:
\begin{align*}
W(R,\pi,h_{t+1}) =& (1-\alpha_t) Q(s_t,a_t) +\alpha_t \big(R(s_t,a_t)+\gamma Q(s_{t+1},a_{t+1})\big).
\end{align*}

In Q-learning, which is off-policy, $W$ has no dependence $\pi$ meaning $C = W(R,\pi,h_{t+1}) - W(R,\pi',h_{t+1}) = 0$ and there is no need for any corrective rewards.

For Sarsa, the effect of $\pi$ appears only in the $a_{t+1}$ term
\begin{align*}
C_0(\pi,R,\pi',R,h_{t+1}) = W(R,\pi,h_{t+1})-W(R,\pi',h_{t+1}) =\alpha_t\gamma \Big[Q(s_{t+1},a_{t+1}) - Q(s_{t+1},a'_{t+1})\Big].
\end{align*}

Modifying Sarsa by adding in the $C$ means that at time-step $t$ the agent updates Q-values as if it were following $\pi$ rather than $\pi'$. So corrective rewards allow Sarsa to learn action-values correctly under a policy transition.

\end{document}